\documentclass[letterpaper, 10 pt, conference]{ieeeconf}  

\IEEEoverridecommandlockouts                              

\usepackage{graphics} 
\usepackage{epsfig} 
\usepackage{mathptmx} 
\usepackage{times} 
\usepackage{amsmath} 
\usepackage{amssymb}  
\usepackage{censor}

\usepackage[ruled,vlined,linesnumbered]{algorithm2e}

\usepackage[caption=false,font=footnotesize]{subfig}

\usepackage{multirow}

\newtheorem{definition}{Definition}

\newtheorem{assumption}{Assumption}
\newtheorem{problem}{Problem}
\newtheorem{lemma}{Lemma}
\newtheorem{theorem}{Theorem}

\DeclareMathOperator*{\argmin}{arg\,min}

\usepackage{booktabs}
\DeclareMathAlphabet{\mathcal}{OMS}{cmsy}{m}{n}

\title{\LARGE \bf
Correct-by-Construction Behavior Tree Synthesis from Signal Temporal Logic Specifications with Application to Robotic Missions
}

\author{Jiaheng Dong, Jingyi Huang and Liang Han$^*$
\thanks{This work was supported by the National Natural Science Foundation of
China under Grant 62373024, and the Fundamental Research Funds for the
Central Universities. Jiaheng Dong, Jingyi Huang and Liang Han are with the Sino-French Engineer School,Beihang University, Beijing 100191, China. (Corresponding Author: Liang Han.){\tt\small E-mail: $\{$Dongjiaheng24, liang$\_$han$\}$@buaa.edu.cn}}
}

\begin{document}

\maketitle
\thispagestyle{empty}
\pagestyle{empty}

\begin{abstract}

Behavior Trees (BTs) are widely adopted for complex task execution in robotics, providing modular, reactive control but lacking formal guarantees. However, existing correct-by-construction synthesis from Linear Temporal Logic (LTL) cannot express quantitative timing constraints. This letter synthesizes correct-by-construction BTs from Signal Temporal Logic (STL) specifications. The workspace is modeled as a timed transition system and abstracted into a zone graph, and an augmented state space tracking both logical progress and timing constraints is introduced. A hierarchical fixed-point algorithm computes winning sets for an STL fragment encompassing safety, reachability, response, recurrence, and persistence, yielding BT subtrees with a runtime constraint function. Correctness guarantees are proven and complexity bounds are derived. Simulations demonstrate specification satisfaction with strictly positive robustness, and a physical quadrotor experiment with six STL specifications validates practical deployability.


\end{abstract}

\section{INTRODUCTION}

Behavior Trees (BTs) are increasingly recognized as a powerful architecture for modular and reactive robot control, and have been deployed across diverse domains including autonomous vehicles, industrial manufacturing, and service robotics \cite{iovino2022survey, dominguez2022stack}. Their hierarchical structure supports task decomposition, preemption, and human readability, making them preferable to finite state machines (FSMs) in many scenarios \cite{iovino2025comparison, biggar2021expressiveness}. However, most BT designs are still manually crafted by human experts, and their correctness with respect to formal specifications remains unverified until deployment.

To address the need for formal guarantees, automated verification \cite{biggar2020framework} and synthesis of correct-by-construction BTs have gained significant attention. Notably, the synthesis of BTs from Linear Temporal Logic (LTL) specifications has been studied in \cite{colledanchise2017synthesis}, where LTL is used to automatically generate BTs that satisfy given agent specifications over finite transition systems. This approach guarantees task completion for specifications expressible in an LTL fragment. However, LTL is inherently qualitative and cannot express quantitative timing constraints. In practice, robotic missions often impose temporal deadlines on tasks like charging, alarm response, and persistent monitoring, which fundamentally fall outside the expressiveness of standard LTL.


Signal Temporal Logic (STL) extends LTL with quantitative timing constraints via real-valued intervals \cite{maler2004monitoring}, supported by established monitoring \cite{donze2010robust} and robustness \cite{deshmukh2017robust} frameworks. However, synthesizing reactive controllers from STL remains an open challenge. Model Predictive Control (MPC) encodes STL as mixed-integer programs \cite{sadraddini2015robust}, yet its receding horizon limits scalability to persistent tasks. Control Barrier Function (CBF) methods enforce STL through continuous-time invariance \cite{lindemann2018control, nawaz2024reactive}, but lack discrete task-level modularity. Reinforcement learning with STL reward shaping \cite{wang2025multi} produces uninterpretable black-box policies without formal guarantees. Automata-based reactive synthesis provides correct-by-construction guarantees via two-player temporal logic games \cite{liu2013synthesis}, yet yields a monolithic finite automaton that entangles all specifications into a single opaque state machine, precluding per-task interpretability and incremental modification. None of these approaches simultaneously offers formal guarantees, modular multi-specification decomposition, and reactive long-horizon execution. BTs naturally fill this gap, as their hierarchical tick-based composition mirrors the conjunctive structure of multi-specification tasks, producing per-specification subtrees that are independently traceable, incrementally extensible, and directly deployable in established robot BT frameworks~\cite{colledanchise2021implementation}.

\begin{figure}[t]
    \centering
    \includegraphics[width=\linewidth]{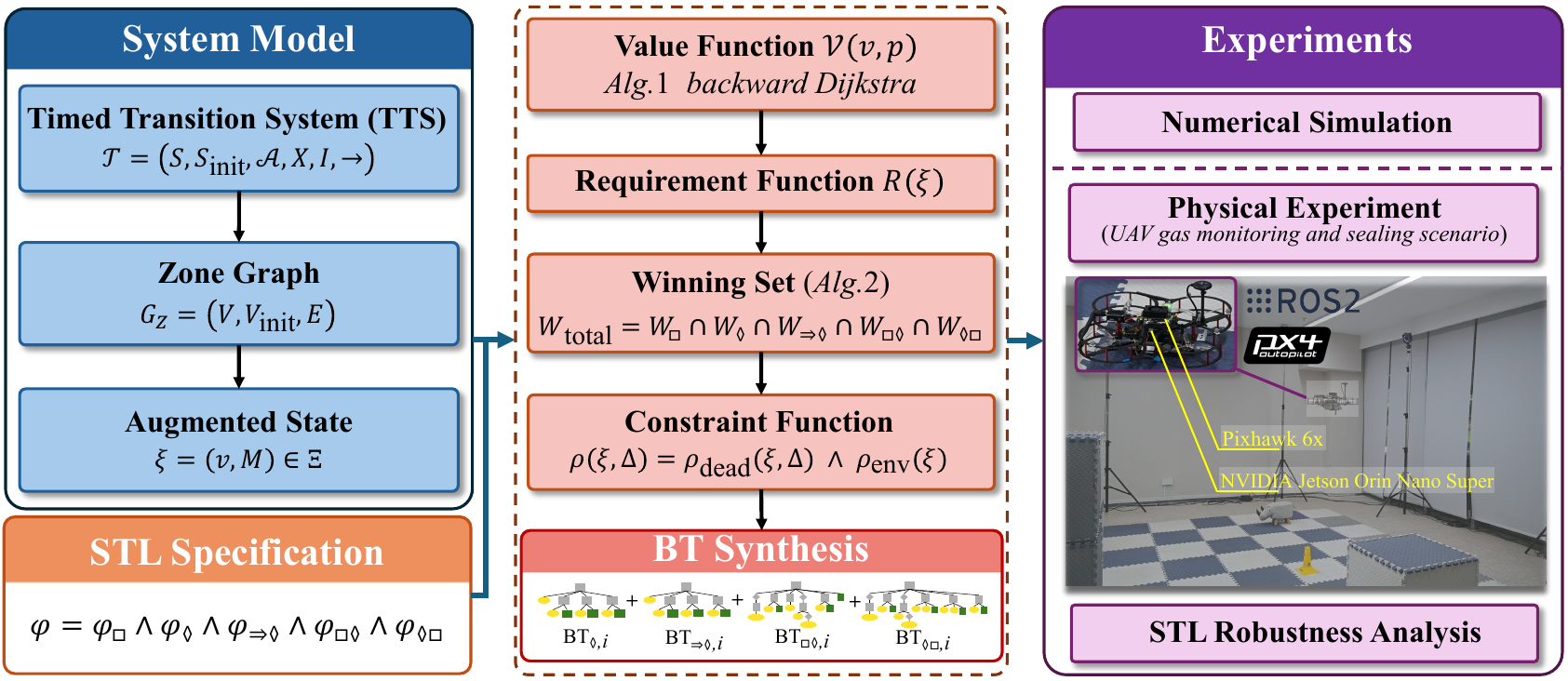}
    \caption{Overview of the proposed synthesis framework, from system modeling and STL specification through offline BT synthesis to online deployment validated in simulation and physical experiments.}
    \label{fig:workflow}
\end{figure}

This letter introduces a formal framework for synthesizing correct-by-construction BTs from STL specifications over timed transition systems, as illustrated in Fig.~\ref{fig:workflow}. The workspace is modeled as a timed transition system, abstracted into a finite zone graph with an augmented state space. A value function estimates the time-to-reach each target proposition, guiding a requirement function that evaluates per-specification feasibility. A hierarchical fixed-point algorithm computes winning sets for an STL fragment covering safety, reachability, response, recurrence, and persistence, and a constraint function confines the synthesized BT, composed of per-specification subtrees, to this winning set, ensuring formal satisfaction under adversarial conditions. The main contributions of this letter are as follows:

\begin{itemize}
    \item A synthesis framework is proposed that maps STL specifications to Behavior Trees, enabling robots to satisfy specifications with quantitative timing constraints.
    \item Theoretical analysis is provided, including correct-by-construction guarantees for the synthesized Behavior Trees and computational complexity bounds.
    \item The proposed framework is validated through numerical simulations and a physical quadrotor flight experiment, demonstrating its feasibility, robustness under environmental disturbances, and practical deployability.
\end{itemize}

\section{PRELIMINARIES}

\subsection{Behavior Trees}

\begin{definition}[Behavior Tree]\label{def:bt}
A \emph{Behavior Tree} (BT) is a tuple $\mathcal{B} = (\mathcal{N}, r, \mathrm{type}, \mathrm{child})$, where $\mathcal{N}$ is the set of nodes, $r \in \mathcal{N}$ is the root, $\mathrm{type}\colon \mathcal{N} \to \{\mathrm{Seq}, \mathrm{Fal}, \mathrm{Dec}, \mathrm{Act}, \mathrm{Cond}\}$ assigns a node type, and $\mathrm{child}\colon \mathcal{N} \to \mathcal{N}^*$ maps each internal node to an ordered list of children. At every \emph{tick}, the root initiates a depth-first traversal in which each node returns a status in $\{S, F, R\}$ according to its type-specific semantics~\cite{colledanchise2018behavior}.
\end{definition}

This letter uses the following node types. The \emph{sequence} node ($\rightarrow$) ticks children left to right, returning S only if all return S, and F or R upon the first such child. The \emph{fallback} node ($?$) returns F only if all return F, and S or R upon the first such child. The \emph{decorator} node (diamond) modifies its child's status: $\mathrm{S{\to}F}$ converts S to F, $\mathrm{S{\to}R}$ converts S to R, and $\mathrm{Inv}$ swaps S and F. The leaf \emph{action} node (rectangle) executes robot behaviors, returning R until completion (S) or failure (F). The leaf \emph{condition} node (ellipse) instantaneously checks propositions, returning S or F.

\subsection{Signal Temporal Logic}

Signal Temporal Logic (STL) extends linear temporal logic operators with real-valued time intervals. Given a set of atomic propositions $\mathit{AP}$, the syntax for an STL formula $\varphi$ is given by the following grammar:
\begin{equation}\label{eq:stl_grammar}
  \varphi \triangleq  p \mid \lnot\varphi \mid \varphi_1 \land \varphi_2 \mid \varphi_1 \mathcal{U}_{[a,b]} \varphi_2,
\end{equation}
where $\varphi_1$ and $\varphi_2$ are STL formulas, $p \in \mathit{AP}$, and $\mathcal{U}_{[a,b]}$ denotes the bounded \emph{until} operator over $[a,b] \subset \mathbb{R}_{+}$. Derived operators include bounded \emph{eventually} ($\Diamond_{[a,b]}\,\varphi = \mathrm{true}\;\mathcal{U}_{[a,b]}\,\varphi$), bounded \emph{always} ($\Box_{[a,b]}\,\varphi = \lnot\Diamond_{[a,b]}\,\lnot\varphi$), disjunction ($\lor$), and implication ($\Rightarrow$); unbounded counterparts use $[0, \infty)$.


\begin{definition}[STL Semantics]\label{def:stl_sem}
Given a signal $\sigma\colon \mathbb{R}_+ \to 2^{\mathit{AP}}$, the satisfaction relation $(\sigma, t) \models \varphi$ is:
\begin{align}
(\sigma, t) \models p \;&\Leftrightarrow\; p \in \sigma(t), \nonumber\\
(\sigma, t) \models \lnot\varphi \;&\Leftrightarrow\; (\sigma,t) \not\models \varphi, \nonumber\\
(\sigma, t) \models \varphi_1 {\land} \varphi_2 \;&\Leftrightarrow\; (\sigma,t) \models \varphi_1 \;\text{and}\; (\sigma,t) \models \varphi_2, \nonumber\\
(\sigma, t) \models \varphi_1 \mathcal{U}_{[a,b]} \varphi_2 \;&\Leftrightarrow\; \exists\, t' {\in} [t{+}a,\, t{+}b]:\; (\sigma, t') \models \varphi_2 \nonumber\\[-2pt]
&\quad\;\; \land\; \forall\, t'' {\in} [t,\, t'):\; (\sigma, t'') \models \varphi_1, \nonumber
\end{align}
where $\varphi, \varphi_1, \varphi_2$ are STL formulas. A signal satisfies $\varphi$, written $\sigma \models \varphi$, iff $(\sigma, 0) \models \varphi$.
\end{definition}

\section{PROBLEM FORMULATION}

This section formalizes the system as a timed transition system (TTS) with a zone graph abstraction, then defines the five-type STL specification and the augmented state, and finally states the synthesis problem.

\subsection{System Model}
The robot and its environment are modeled as a \emph{timed transition system} (TTS) $\mathcal{T} = (S, S_{\mathrm{init}}, \mathcal{A}, X, I, {\to})$, where $S$ is a finite set of locations, $S_{\mathrm{init}} \subseteq S$ is the set of initial locations, $\mathcal{A}$ is a finite set of actions, $X$ is a finite set of clocks, $I\colon S \to \Phi(X)$ assigns invariant constraints, and ${\to}$ defines the discrete transitions $s \xrightarrow{g,a,r} s'$ with guard $g$ and clock resets $r \subseteq X$. To make the infinite continuous state space tractable, the timed transition system $\mathcal{T}$ is abstracted into a finite \emph{zone graph} $G_Z = (V, V_{\mathrm{init}}, E)$, where each node $v=(s, D) \in V$ pairs a discrete location $s$ with a clock zone $D$ represented by a Difference Bound Matrix (DBM). The edges $E \subseteq V \times \mathcal{A} \times V$ encode the successor zones generated by alternating \emph{time transitions} and \emph{discrete transitions}~\cite{kwiatkowska2011prism}. During time transitions, all clocks advance uniformly while satisfying the invariant, whereas during discrete transitions, guards are verified and specified clocks are reset.

\subsection{STL Specification}
The agent specification $\varphi$ is defined as a conjunction of five STL formula types:
\begin{equation}\label{eq:spec_conj}
    \varphi = \varphi_{\Box} \land \varphi_{\Diamond} \land \varphi_{\Rightarrow\Diamond} \land \varphi_{\Box\Diamond} \land \varphi_{\Diamond\Box},
\end{equation}
where each specification type is indexed by $\kappa \in \mathcal{K} \triangleq \{ \Box,\, \Diamond,\, {\Rightarrow}{\Diamond},\, \Box\Diamond,\, \Diamond\Box \}$ and comprises one or more STL formulas, $\varphi_{\kappa} = \bigwedge_i \varphi_{\kappa,i}$. To simplify notation, an index $n \in \{1,\ldots,5\}$ is assigned to each type $\kappa$ following the order in $\mathcal{K}$, where parameters $p_{\kappa,i}$ and $q_{\kappa,i}$ are respectively denoted as $p_{n,i}$ and $q_{n,i}$. Table~\ref{tab:stl_fragment} defines the grammar and semantics of each $\varphi_{\kappa,i}$, with $p_{\kappa,i}$ denoting a controllable atomic proposition and $q_{\kappa,i}$ an uncontrollable atomic proposition.


\begin{table}[h]
  \centering
  \caption{STL Specification Formula and Semantics}
  \label{tab:stl_fragment}
  \renewcommand{\arraystretch}{1.2}
  \resizebox{\columnwidth}{!}{%
  \begin{tabular}{@{}lll@{}}
    \toprule
    \textbf{Type} & \textbf{Formula} & \textbf{Semantics} \\
    \midrule
    Safety & $\varphi_{\Box,i} = \Box_{[l_{1,i},\, u_{1,i}]} \, p_{1,i}$ & $p_{1,i}$ holds throughout $[l_{1,i},\, u_{1,i}]$ \\
    Reachability & $\varphi_{\Diamond,i} = \Diamond_{[l_{2,i},\, u_{2,i}]}\, p_{2,i}$ & $p_{2,i}$ satisfied within $[l_{2,i},\, u_{2,i}]$ \\
    Response & $\varphi_{\Rightarrow\Diamond,i} = \Box\bigl(q_{3,i} \Rightarrow \Diamond_{[l_{3,i},\, u_{3,i}]}\, p_{3,i}\bigr)$ & If $q_{3,i}$, satisfy $p_{3,i}$ in $[l_{3,i},\, u_{3,i}]$ \\
    Recurrence & $\varphi_{\Box\Diamond,i} = \Box\bigl(q_{4,i} \Rightarrow \Box_{[0,\,\bar{u}_{4,i}]} \Diamond_{[0,\,\hat{u}_{4,i}]}\, p_{4,i}\bigr)$ & While $q_{4,i}$ for $\bar{u}_{4,i}$, visit $p_{4,i}$ every $\hat{u}_{4,i}$ \\
    Persistence & $\varphi_{\Diamond\Box,i} = \Box\bigl(q_{5,i} \Rightarrow \Diamond_{[0,\,\bar{u}_{5,i}]} \Box_{[0,\,\hat{u}_{5,i}]}\, p_{5,i}\bigr)$ & If $q_{5,i}$, maintain $p_{5,i}$ for $\hat{u}_{5,i}$ within $\bar{u}_{5,i}$ \\
    \bottomrule
  \end{tabular}
  }
\end{table}

\subsection{Augmented State}
Since the zone graph $G_Z$ encodes physical dynamics but lacks specification tracking, each node $v \in V$ is augmented with a memory state $M = (\mathbf{p}, \mathbf{b}, \mathbf{t})$. This yields an augmented state $\xi = (v, M) \in \Xi$, where $\Xi$ denotes the augmented state space. The memory state $M$ consists of components that capture logical completion, internal modes, and timing constraints: 1) Completion flags $\mathbf{p} = (p_{\mathrm{reach}},\, p_{\mathrm{resp}})$ track boolean completion status for reachability and response tasks; 2) Mode flags $\mathbf{b} = (b_{\mathrm{rec}}, b_{\mathrm{act}}, b_{\mathrm{hld}})$ encode internal modes (\text{IDLE}/\text{ACTIVE} for recurrence, and \text{IDLE}/\text{SEARCH}/\text{HOLD} for persistence); 3) Timers $\mathbf{t} = (x_{\mathrm{reach}},\, x_{\mathrm{resp}},\, z,\, y)$ record the elapsed time. To ensure a finite state space $\Xi$, every timer $x$ is saturated at its deadline $u$: $\text{sat}(x, u) = \min(x, u+1)$. Transitions in $\Xi$, denoted $\delta(\xi, a) = \xi', a\in\mathcal{A}$, jointly update the zone graph node $v$, advance saturated timers by the associated time transitions, and update the logic flags based on the atomic propositions satisfied in the new state.

\subsection{Problem Statement}
To synthesize correct-by-construction BTs against the agent specification $\varphi$, the following assumptions are made regarding the system and its environment.





\begin{assumption}[Non-Zeno]\label{assump:Non Zeno}
    Every location $s \in S$ admits an idle action that allows the system to remain at $s$ while time elapses. The system is non-Zeno: every execution produces only finitely many discrete transitions in bounded time.
\end{assumption}

\begin{assumption}[Initial Feasibility]\label{assump:Initial Consistency}
    The specification $\varphi$ is realizable from the initial augmented state $\xi_0$.
\end{assumption}

\begin{assumption}[Non-Blocking]\label{assump:No Deadlock}
    For every reachable augmented state $\xi \in \Xi$, there exists an action $a \in \mathcal{A}$ whose successor set is non-empty, i.e., $\delta(\xi, a) \neq \emptyset$.
\end{assumption}

\begin{assumption}[Bounded Adversarial Interference]\label{assump: Bounded Adversial}
    For any reachable $\xi \in \Xi$, the environment cannot indefinitely prevent the system from executing a discrete transition that changes location.
\end{assumption}






\begin{definition}[Induced Policy and Execution]\label{def:policy_trace}
Given a BT $\mathcal{B}$ over the augmented state space $\Xi$, the \emph{induced policy} $\pi_{\mathcal{B}}\colon \Xi \to \mathcal{A}$ maps each state $\xi$ to the action selected during the \emph{tick} of $\mathcal{B}$ in $\xi$. An \emph{execution} under $\pi_{\mathcal{B}}$ from $\xi_0$ is the sequence $(\xi_0, t_0),(\xi_1, t_1),\ldots$ with $t_0 {=} 0$, $t_{k+1} {\ge} t_k$, and $\xi_{k+1} \in \delta(\xi_k,\, \pi_{\mathcal{B}}(\xi_k))$. Each execution induces a signal $\hat{\sigma}\colon \mathbb{R}_+ \to 2^{\mathit{AP}}$ via the proposition labeling of zone graph locations. Denote $\pi \models \varphi$ iff $\hat{\sigma} \models \varphi$ for every execution from~$\xi_0$.
\end{definition}

\begin{problem}\label{prob:Problem_synthesis}
   Given a TTS $\mathcal{T}$ and its specification $\varphi$, under Assumptions~\ref{assump:Non Zeno}--\ref{assump: Bounded Adversial}, synthesize a Behavior Tree $\mathcal{B}$ whose induced policy $\pi_{\mathcal{B}}$ satisfies $\pi_{\mathcal{B}} \models \varphi$.
\end{problem}

The main challenge is the interplay between the large augmented state space $\Xi$ and the quantitative timing constraints embedded in $\varphi$: a correct policy must ensure both invariance and timely progress for each specification type under adversarial conditions.

\section{Proposed Method}
This section details the synthesis framework for Problem~\ref{prob:Problem_synthesis}. To establish correct-by-construction guarantees, the approach ensures safety via forward invariance and liveness via monotonic progress. This is organized in three steps: (1) formulating requirement functions, guided by a value function, to compute a hierarchical winning set; (2) applying a runtime constraint function to restrict transitions within this set; and (3) mapping the resulting policy into a behavior tree.

\subsection{Value Function}
The value function estimates the minimum time to reach a state satisfying a given proposition $p$ from any zone graph node, serving as a heuristic to guide action selection and verify timing feasibility.


\begin{definition}[Value Function]
    Given a zone graph $G_Z = (V, V_{\mathrm{init}}, E)$ and a proposition~$p$, the \emph{value function} $\mathcal{V}\colon V \times \mathit{AP} \to \mathbb{R}_{+} \cup \{\infty\}$ is defined as:
    \begin{equation}\label{eq:value_func}
        \mathcal{V}(v, p) = \begin{cases}
            0 & \text{if } s \models p\\[2pt]
            \displaystyle\min_{(v,a,v') \in E} \bigl\{c(v,v') + \mathcal{V}(v',p)\bigr\} & \mathrm{otherwise}
        \end{cases},
    \end{equation}
\end{definition}
where the cost $c(v,v') = w_e + \delta_e$ combines the action weight~$w_e$ with the minimum delay~$\delta_e$ within zone~$D$ needed to satisfy the guard of edge $e = (v,a,v')$. Since $\mathcal{V}$ satisfies the optimal substructure with non-negative costs, it can be efficiently computed for the entire state space via a backward Dijkstra, as shown in Algorithm~\ref{alg:value}. 


\begin{algorithm}[t]
\caption{\textsc{ComputeValueFunction}($G_Z$, $p$)}\label{alg:value}

\SetKwInOut{Input}{Input}
\SetKwInOut{Output}{Output}

\Input{Zone graph $G_Z = (V, V_{\mathrm{init}}, E)$, atomic proposition $p$}
\Output{$\mathcal{V}(v,p)$ for all $v \in V$}

$\mathcal{V}(v,p) \leftarrow \infty$ for all $v \in V$\;
$\mathcal{Q} \leftarrow \emptyset$ \tcp*{min-priority queue}

\ForEach{$v = (s, D) \in V$ with $s \models p$}{
    $\mathcal{V}(v,p) \leftarrow 0$;\quad $\mathcal{Q}.\mathrm{push}(v, 0)$\;
}

\While{$\mathcal{Q} \neq \emptyset$}{
    $(v', d) \leftarrow \mathcal{Q}.\mathrm{pop}()$\;
    \If{$d > \mathcal{V}(v',p)$}{
        \textbf{continue}\;
    }
    \ForEach{reverse edge $(v, a, v') \in E$}{
        \If{$d + c(v,v') < \mathcal{V}(v,p)$}{
            $\mathcal{V}(v,p) \leftarrow d + c(v,v')$;\quad $\mathcal{Q}.\mathrm{push}(v, \mathcal{V}(v,p))$\;
        }
    }
}
\Return $\mathcal{V}$\;
\end{algorithm}

\subsection{Requirement Function}

\begin{definition}[Requirement Function]
    The \emph{requirement f\-u\-n\-c\-t\-i\-o\-n} $R\colon \Xi \to \{\mathrm{true},\mathrm{false}\}$ determines whether the agent specification~$\varphi$ can still be satisfied from augmented state~$\xi$:
    \begin{equation}\label{eq:R_total}
        R(\xi) = R_{\Box}(\xi) \land R_{\Diamond}(\xi) \land R_{\Rightarrow\Diamond}(\xi) \land R_{\Box\Diamond}(\xi) \land R_{\Diamond\Box}(\xi).
    \end{equation}
\end{definition}

Each \emph{type-level} component is a conjunction of requirement functions over specifications of the same type:
\begin{equation}\label{eq:R_conj}
    R_{\kappa}(\xi) \triangleq \textstyle\bigwedge_i R'_{\kappa}(\xi,\, p_{\kappa,i}),
\end{equation}
where $\kappa \in \mathcal{K}$ and $R'_{\kappa}(\xi, p_{\kappa,i})$ evaluates a single specification $\varphi_{\kappa,i}$ with target proposition~$p_{\kappa,i}$. To systematically enforce $\varphi$, every requirement function $R'_{\kappa}$ evaluates the conjunction of three critical dimensions: 1) Logical Completion: whether the specification is already satisfied or currently inactive; 2) Timing Feasibility: whether reaching the target is physically possible within the remaining deadline, bounded by $\mathcal{V}(v,p)$; and 3) Recursive Invariance: whether the system can transition to a valid successor state.

For each $\varphi_{\kappa,i}$, $\kappa \in \mathcal{K}$, let the \emph{type-level winning set} $W_{\kappa,i} \subseteq \Xi$ denote the set of augmented states from which $\varphi_{\kappa,i}$ can be satisfied:
\begin{equation}\label{eq:W_type}
    W_{\kappa,i} = \bigl\{\xi \in \Xi \mid R'_{\kappa}(\xi,\, p_{\kappa,i}) = \mathrm{true} \bigr\}.
\end{equation}
Since each $R_{\kappa}(\xi)$ is the conjunction over specifications of the same type~\eqref{eq:R_conj}, the corresponding type-level winning set is $W_{\kappa} = \bigcap_i W_{\kappa,i}$. The total winning set aggregates all specifications and is given by:
\begin{equation}\label{eq:winning_total}
    W_{\mathrm{total}} = W_{\Box} \cap W_{\Diamond} \cap W_{\Rightarrow\Diamond} \cap W_{\Box\Diamond} \cap W_{\Diamond\Box}.
\end{equation}
By construction, $\xi \in W_{\mathrm{total}}$ if and only if $R(\xi) = \mathrm{true}$, indicating that all specifications can still be satisfied from~$\xi$.

\begin{definition}[Controllable Predecessor]
Given the augmented state space~$\Xi$, the \emph{controllable predecessor} operator $\operatorname{CPre} \colon 2^{\Xi} \to 2^{\Xi}$ maps a set $W \subseteq \Xi$ to the set of states from which the robot can guarantee remaining in~$W$ in one step:
\begin{equation}\label{eq:cpre-}
   \operatorname{CPre}(W) = \bigl\{\xi \in \Xi \;\big|\; \exists\, a \in \mathcal{A},\; \forall \xi' \in \delta(\xi,a), \xi' \in W\bigr\}. 
\end{equation}

\end{definition}

\textbf{Safety} $R_{\Box}$:
For $\varphi_{\Box,i} = \Box_{[l_{1,i},u_{1,i}]}\, p_{1,i}$ with elapsed time~$t$:
\begin{equation}\label{eq:req_safe}
    R'_{\Box}(\xi,\, p_{1,i}) = \begin{cases}
        \mathrm{true} & \text{if } t > u_{1,i}\\[2pt]
        (s \models p_{1,i}) \land \xi \in \operatorname{CPre}(W_{\Box,i}) & \text{if } l_{1,i} {\leq}\, t {\leq}\, u_{1,i}\\[2pt]
        \xi \in \operatorname{CPre}(W_{\Box,i}) & \text{if } t < l_{1,i}
    \end{cases}.
\end{equation}

\textbf{Reachability} $R_{\Diamond}$:
For $\varphi_{\Diamond,i} = \Diamond_{[l_{2,i},u_{2,i}]}\, p_{2,i}$ with timer~$x_{\mathrm{reach}}^i$:
\begin{equation}\label{eq:req_reach}
R'_{\Diamond}(\xi,\, p_{2,i}) =
\begin{cases}
\mathrm{true} & \!\!\!\!\text{if } p_{\mathrm{reach}}^i \\[8pt]
\begin{aligned}
    &\bigl(\mathcal{V}(v,p_{2,i}) \le u_{2,i} - x_{\mathrm{reach}}^i\bigr) \\[-2pt]
    &\land \xi \in \operatorname{CPre}(W_{\Diamond,i})
\end{aligned} & \!\!\!\!\begin{aligned}[t]
    &\text{if } \neg p_{\mathrm{reach}}^i \\[-2pt]
    &\land x_{\mathrm{reach}}^i < l_{2,i}
\end{aligned} \\[12pt]
\mathcal{V}(v,p_{2,i}) \le u_{2,i} - x_{\mathrm{reach}}^i & \!\!\!\!\begin{aligned}[t]
    &\text{if } \neg p_{\mathrm{reach}}^i \\[-2pt]
    &\land x_{\mathrm{reach}}^i \ge l_{2,i}
\end{aligned}
\end{cases}.
\end{equation}
The flag $p_{\mathrm{reach}}^i$ is initialized to~$\mathrm{false}$ and is set to $\mathrm{true}$ once $s \models p_{2,i}$ while $x_{\mathrm{reach}}^i \in [l_{2,i}, u_{2,i}]$.


\textbf{Response} $R_{\Rightarrow\Diamond}$: For $\varphi_{\Rightarrow\Diamond,i} = \Box(q_{3,i} \Rightarrow \Diamond_{[l_{3,i},u_{3,i}]}\, p_{3,i})$ with timer~$x_{\mathrm{resp}}^i$:
\begin{equation}\label{eq:req_resp}
R'_{\Rightarrow\Diamond}(\xi,\, p_{3,i}) =
\begin{cases}
\mathrm{true} & \!\!\!\!\begin{aligned}[t]
    &\text{if } s \not\models q_{3,i} \\[-2pt]
    &\lor\; p_{\mathrm{resp}}^i
  \end{aligned} \\[12pt]
\begin{aligned}
    &\textstyle\bigvee_{a \in \mathcal{A}} \bigwedge_{\xi' \in \delta(\xi,\, a)} \Bigl[ \xi' {\in} W_{\Rightarrow\Diamond,i} \\[-2pt]
    &\quad \land\; \textstyle\bigwedge_{j \neq i} \mathrm{Feas}(j,\,\xi') \Bigr]
\end{aligned}
& \!\!\!\!\begin{aligned}[t]
    &\text{if } s \models p_{3,i} \\[-2pt]
    &\land x_{\mathrm{resp}}^i \ge l_{3,i}
  \end{aligned} \\[16pt]
\begin{aligned}
    &\bigl(\mathcal{V}(v, p_{3,i}) \le u_{3,i} - x_{\mathrm{resp}}^i\bigr) \\[-2pt]
    &\;\land\; \xi \in \operatorname{CPre}(W_{\Rightarrow\Diamond,i})
\end{aligned}
& \!\!\!\!\text{otherwise}
\end{cases},
\end{equation}
where $\mathrm{Feas}(j, \xi') \triangleq (s' \not\models q_{3,j}) \lor p_{\mathrm{resp}}^{j\prime} \lor \bigl(\mathcal{V}(v', p_{3,j}) \le u_{3,j} - x_{\mathrm{resp}}^{j\prime}\bigr)$, with $s', v', p_{\mathrm{resp}}^{j\prime}$, and $x_{\mathrm{resp}}^{j\prime}$ denoting the components of $\xi'$, ensures that resolving task~$i$ preserves feasibility of concurrent tasks~$j$. The flag $p_{\mathrm{resp}}^i$ is set to $\mathrm{true}$ when $s \models p_{3,i}$ with $x_{\mathrm{resp}}^i \in [l_{3,i}, u_{3,i}]$, and resets to $\mathrm{false}$ upon $q_{3,i}$.


\textbf{Recurrence} $R_{\Box \Diamond}$: For $\varphi_{\Box\Diamond,i} = \Box\bigl(q_{4,i} \Rightarrow \Box_{[0,\,\bar{u}_{4,i}]} \Diamond_{[0,\,\hat{u}_{4,i}]}\, p_{4,i}\bigr)$, with timers $z_i$ and $y_i$ tracking the elapsed time for the outer window and the inner deadline, respectively. Depending on the activation flag $b_{\mathrm{rec}}^i$, the specification is classified into two internal modes: \text{IDLE} ($\neg b_{\mathrm{rec}}^i$) and \text{ACTIVE} ($b_{\mathrm{rec}}^i$):
\begin{equation}\label{eq:req_rec}
R'_{\Box\Diamond}(\xi,\, p_{4,i}) =
\begin{cases}
\begin{aligned}
    &(z_i > \bar{u}_{4,i}) \;\lor \\
    &\bigl[\mathcal{V}(v,p_{4,i}) \le \hat{u}_{4,i} - y_i \\
    &\;\land\; \xi \in \operatorname{CPre}(W_{\Box\Diamond,i})\bigr]
\end{aligned}
& \!\!\text{if } b_{\mathrm{rec}}^i \\
\xi \in \operatorname{CPre}(W_{\Box\Diamond,i})
& \!\!\text{if } \neg b_{\mathrm{rec}}^i
\end{cases}.
\end{equation}
The memory triple $(b_{\mathrm{rec}}^i,\, z_i,\, y_i)$ evolves upon each transition to a successor location $s'$ as follows:
\begin{equation}\label{eq:rec_update}
(b_{\mathrm{rec}}^{i\prime},\, z_i^{\prime},\, y_i^{\prime}) =
\begin{cases}
(\mathrm{true},\, 0,\, 0) & \!\text{if } \neg b_{\mathrm{rec}}^i \land s' \models q_{4,i} \\
(\mathrm{true},\, z_i,\, 0) & \!\text{if } b_{\mathrm{rec}}^i \land s' \models p_{4,i} \land z_i \le \bar{u}_{4,i} \\
(\mathrm{false},\, z_i,\, y_i) & \!\text{if } b_{\mathrm{rec}}^i \land z_i > \bar{u}_{4,i} \\
(b_{\mathrm{rec}}^i,\, z_i,\, y_i) & \!\text{otherwise}
\end{cases}.
\end{equation}

\textbf{Persistence} $R_{\Diamond\Box}$: For $\varphi_{\Diamond\Box,i} = \Box\bigl(q_{5,i} \Rightarrow \Diamond_{[0,\,\bar{u}_{5,i}]} \Box_{[0,\,\hat{u}_{5,i}]}\, p_{5,i}\bigr)$, with timers $z_i$ and $y_i$ tracking the elapsed time for the outer deadline and the holding duration, respectively. Depending on the activation mode $b_{\mathrm{act}}^i$ and holding mode $b_{\mathrm{hld}}^i$, the specification is classified into three internal modes: \text{IDLE} ($\neg b_{\mathrm{act}}^i$), \text{SEARCH} ($b_{\mathrm{act}}^i \land \neg b_{\mathrm{hld}}^i$), and \text{HOLD} ($b_{\mathrm{act}}^i \land b_{\mathrm{hld}}^i$):
\begin{equation}\label{eq:req_pers}
\begin{aligned}
    & R'_{\Diamond\Box}(\xi,\, p_{5,i}) =
    \\
    &\begin{cases}
    \begin{aligned}
        &(y_i \ge \hat{u}_{5,i}) \;\lor \\
        &\bigl[ (s {\models} p_{5,i}) \land \xi \in \operatorname{CPre}(W_{\Diamond\Box,i}) \bigr]
    \end{aligned}
    & \!\!\text{if } b_{\mathrm{act}}^i \land b_{\mathrm{hld}}^i \\
    \begin{aligned}
        &(z_i {\le} \bar{u}_{5,i}) \land \bigl(\mathcal{V}(v, p_{5,i}) {\le} \bar{u}_{5,i} {-} z_i\bigr) \\
        &\land \xi \in \operatorname{CPre}(W_{\Diamond\Box,i})
    \end{aligned}
    & \!\!\text{if } b_{\mathrm{act}}^i \land \neg b_{\mathrm{hld}}^i \\
    \xi \in \operatorname{CPre}(W_{\Diamond\Box,i})
    & \!\!\text{if } \neg b_{\mathrm{act}}^i
    \end{cases}.
\end{aligned}
\end{equation}
The memory tuple $(b_{\mathrm{act}}^i,\, b_{\mathrm{hld}}^i,\, z_i,\, y_i)$ is updated upon each transition to a successor location $s'$ according to the following rules:
\begin{equation}\label{eq:pers_update}
\begin{aligned}
&(b_{\mathrm{act}}^{i\prime},\, b_{\mathrm{hld}}^{i\prime},\, z_i^{\prime},\, y_i^{\prime}) = \\
&\begin{cases}
    (\mathrm{true}, \mathrm{false}, 0, 0) 
    & \!\text{if } \neg b_{\mathrm{act}}^i \land s' \models q_{5,i} \\
    (\mathrm{true}, \mathrm{true}, z_i, 0) 
    & \!\begin{aligned}[t]
        &\text{if } b_{\mathrm{act}}^i \land \neg b_{\mathrm{hld}}^i \land {} \\
        & \quad s' \models p_{5,i} \land z_i \le \bar{u}_{5,i} 
      \end{aligned} \\
    (\mathrm{false}, \mathrm{false}, z_i, y_i)
    & \!\begin{aligned}[t]
        &\text{if } (b_{\mathrm{act}}^i \land \neg b_{\mathrm{hld}}^i \land z_i > \bar{u}_{5,i}) \\
        &\quad \lor (b_{\mathrm{act}}^i \land b_{\mathrm{hld}}^i \land y_i \ge \hat{u}_{5,i})
      \end{aligned} \\
    (\mathrm{true}, \mathrm{false}, z_i, 0) 
    & \!\begin{aligned}[t]
        &\text{if } b_{\mathrm{act}}^i \land b_{\mathrm{hld}}^i \land {} \\
        &\quad s' \not\models p_{5,i} \land y_i < \hat{u}_{5,i} 
      \end{aligned} \\
    (b_{\mathrm{act}}^i, b_{\mathrm{hld}}^i, z_i, y_i) 
    & \!\text{otherwise}
\end{cases}.
\end{aligned}
\end{equation}
When triggered by $q_{5,i}$, the agent transitions from \text{IDLE} to \text{SEARCH}, navigating toward $p_{5,i}$ within the outer deadline $\bar{u}_{5,i}$. Once reached, \text{HOLD} maintains $p_{5,i}$ until $y_i \ge \hat{u}_{5,i}$, reverting to \text{SEARCH} if $p_{5,i}$ is violated. All three modes enforce recursive invariance via $\operatorname{CPre}$.

With all five requirement functions defined, Algorithm~\ref{alg:winning} computes $W_{\mathrm{total}}$ via hierarchical fixed-point iterations. Each stage refines the winning set of one specification type within the intersection of previously computed sets.


\begin{algorithm}[h]
\caption{\textsc{ComputeWinningSet}($\Xi$, $\varphi$, $G_Z$)}\label{alg:winning}

\SetKwInOut{Input}{Input}
\SetKwInOut{Output}{Output}

\Input{Augmented state space $\Xi$, agent specification $\varphi$, zone graph $G_Z$}
\Output{Total winning set $W_{\mathrm{total}}$}

Compute $\mathcal{V}(v,\, p_{\kappa,i})$ for all propositions\;

\textbf{Stage 1: Safety}\;
$W_{\Box} \leftarrow \Xi$;\quad $\hat{W} \leftarrow \emptyset$\;
\While{$W_{\Box} \neq \hat{W}$}{
    $\hat{W} \leftarrow W_{\Box}$\;
    \ForAll{$\xi \in \hat{W}$}{
        \lIf{$\neg R_{\Box}(\xi)$}{$W_{\Box} \leftarrow W_{\Box} \setminus \{\xi\}$}
    }
}

\textbf{Stage 2: Reachability}\;
$W_{\Diamond} \leftarrow W_{\Box}$;\quad $\hat{W} \leftarrow \emptyset$\;
\While{$W_{\Diamond} \neq \hat{W}$}{
    $\hat{W} \leftarrow W_{\Diamond}$\;
    \ForAll{$\xi \in \hat{W}$}{
        \lIf{$\neg R_{\Diamond}(\xi)$}{$W_{\Diamond} \leftarrow W_{\Diamond} \setminus \{\xi\}$}
    }
}

\textbf{Stage 3: Recurrence}\;
$W_{\Box\Diamond} \leftarrow W_{\Box} \cap W_{\Diamond}$;\quad $\hat{W} \leftarrow \emptyset$\;
\While{$W_{\Box\Diamond} \neq \hat{W}$}{
    $\hat{W} \leftarrow W_{\Box\Diamond}$\;
    \ForAll{$\xi \in \hat{W}$}{
        \lIf{$\neg R_{\Box\Diamond}(\xi)$}{$W_{\Box\Diamond} \leftarrow W_{\Box\Diamond} \setminus \{\xi\}$}
    }
}

\textbf{Stage 4: Persistence}\;
$W_{\Diamond\Box} \leftarrow W_{\Box\Diamond}$;\quad $\hat{W} \leftarrow \emptyset$\;
\While{$W_{\Diamond\Box} \neq \hat{W}$}{
    $\hat{W} \leftarrow W_{\Diamond\Box}$\;
    \ForAll{$\xi \in \hat{W}$}{
        \lIf{$\neg R_{\Diamond\Box}(\xi)$}{$W_{\Diamond\Box} \leftarrow W_{\Diamond\Box} \setminus \{\xi\}$}
    }
}

\textbf{Stage 5: Response}\;
$W_{\Rightarrow\Diamond} \leftarrow \emptyset$;\quad $\hat{W} \leftarrow W_{\Diamond\Box}$\;
\While{$W_{\Rightarrow\Diamond} \neq \hat{W}$}{
    $\hat{W} \leftarrow W_{\Rightarrow\Diamond}$\;
    \ForAll{$\xi \in W_{\Diamond\Box} \setminus \hat{W}$}{
        \lIf{$R_{\Rightarrow\Diamond}(\xi)$}{$W_{\Rightarrow\Diamond} \leftarrow W_{\Rightarrow\Diamond} \cup \{\xi\}$}
    }
}

\Return $W_{\mathrm{total}} \leftarrow W_{\Box} \cap W_{\Diamond} \cap W_{\Box\Diamond} \cap W_{\Diamond\Box} \cap W_{\Rightarrow\Diamond}$\;
\end{algorithm}

\subsection{Constraint Function}
The winning set characterizes states from which all specifications can be satisfied. At runtime, the constraint function prevents the behavior tree from executing transitions that would leave this set.

\begin{definition}[Constraint Function]
    The \emph{constraint function} $\rho\colon \Xi \times \mathbb{R}_{+} \to \{\mathrm{true}, \mathrm{false}\}$ is:
    \begin{equation}\label{eq:constraint}
        \rho(\xi, \Delta) = \rho_{\mathrm{dead}}(\xi, \Delta) \;\land\; \rho_{\mathrm{env}}(\xi),
    \end{equation}
    with:
    \begin{align}
        \rho_{\mathrm{dead}}(\xi, \Delta) &= \textstyle\bigwedge_{t \in [0,\Delta]} \bigl[\xi{+}t \in W_{\mathrm{total}} \;\land\; \nu{+}t \models I(s)\bigr], \label{eq:rho_dead}\\
        \rho_{\mathrm{env}}(\xi) &= \textstyle\bigvee_{a \in \mathcal{A}}\; \bigwedge_{\xi' \in \delta(\xi,a)}\; \xi' \in W_{\mathrm{total}}. \label{eq:rho_env}
    \end{align}
\end{definition}

The deadline guard~$\rho_{\mathrm{dead}}$ verifies that the system remains in the winning set and satisfies the location invariant~$I(s)$ (with current clock valuation~$\nu$) throughout a proposed waiting period~$\Delta$. The environment guard~$\rho_{\mathrm{env}}$ ensures that at least one action guarantees all possible successors remain winning, consistent with Assumption~\ref{assump:No Deadlock}.

\subsection{Behavior Tree Construction}

Given the requirement function $R$ and constraint function $\rho$, a BT $\mathcal{B}$ ensuring $\pi_{\mathcal{B}} \models \varphi$ is synthesized. As shown in Fig.~\ref{fig:bt_arch}, the root node groups subtrees by type. Safety is enforced via $R$ and $\rho$ blocking transitions outside $W_{\Box}$.

\begin{figure}[h]
    \centering
    \includegraphics[width=0.85\linewidth]{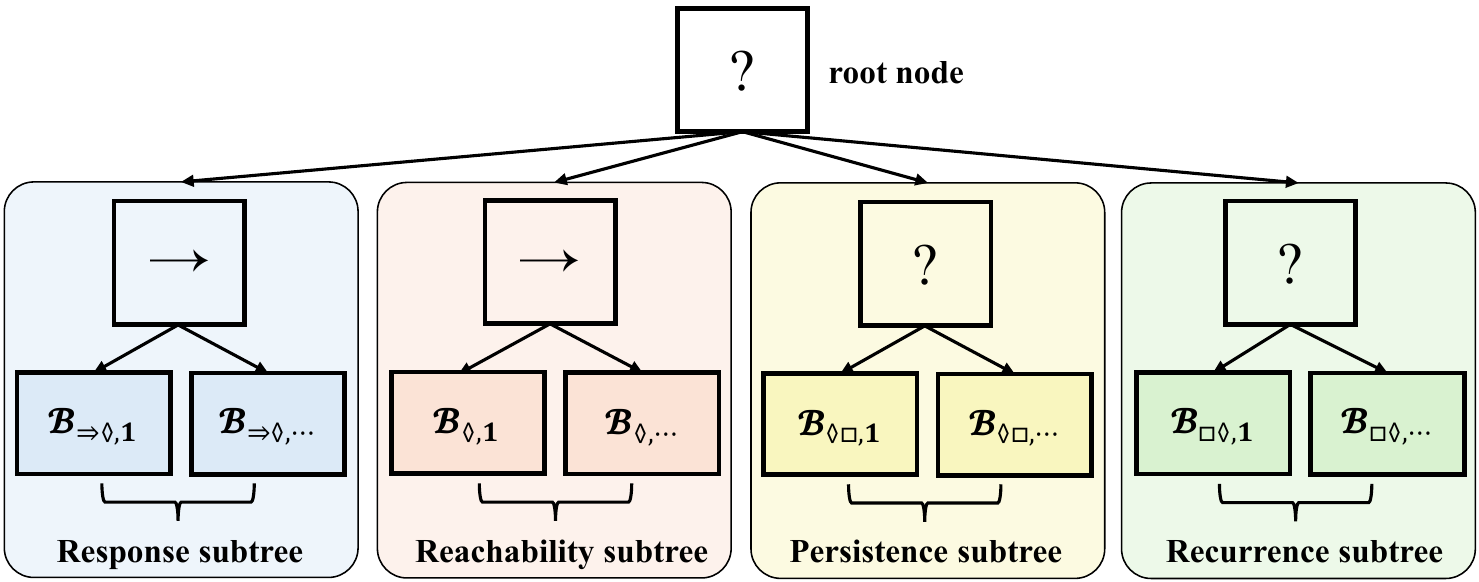}
    \caption{Overall structure of the synthesized Behavior Tree.}
    \label{fig:bt_arch}
\end{figure}

All four subtree types share two common branches. The \emph{action-selection} branch (as shown in Fig.~\ref{fig:bt_reach}) evaluates each candidate successor $\xi' \in \delta(\xi, a)$, retains those satisfying $\xi' \in W_{\mathrm{total}} \land \rho_{\mathrm{env}}(\xi')$, and executes
\begin{equation}\label{eq:action_select}
    a^* = \argmin_{\substack{a \in \mathcal{A}}} \mathcal{V}\!\bigl(v',\; p_{\kappa,i}\bigr),
\end{equation}
where $v'$ denotes the zone-graph node of the successor state $\xi'$, $p_{\kappa,i}\in \mathit{AP}$. The \emph{safe-wait} branch (as shown in Fig.~\ref{fig:bt_reach}) verifies $\rho_{\mathrm{dead}}(\xi, \Delta)$ and delays for~$\Delta$ time units when no discrete action is preferred. Beyond these shared components, each specification type requires a distinct subtree structure.

\textbf{Reachability subtree} $\mathcal{B}_{\Diamond,i}$: For $\varphi_{\Diamond,i} = \Diamond_{[l_{2,i},u_{2,i}]}\, p_{2,i}$, the subtree is illustrated in Fig.~\ref{fig:bt_reach}.
Once $s \models p_{2,i}$ holds within $[l_{2,i}, u_{2,i}]$, branch~2 sets $p_{\mathrm{reach}}^i$ to $\mathrm{true}$; all subsequent ticks return \emph{success} via branch~1. Otherwise, the \emph{action-selection} or \emph{safe-wait} branches execute. When $x_{\mathrm{reach}}^i > u_{2,i}$, no branch succeeds and the subtree returns \emph{failure}.

\begin{figure}[h]
    \centering
    \includegraphics[width=0.9\linewidth]{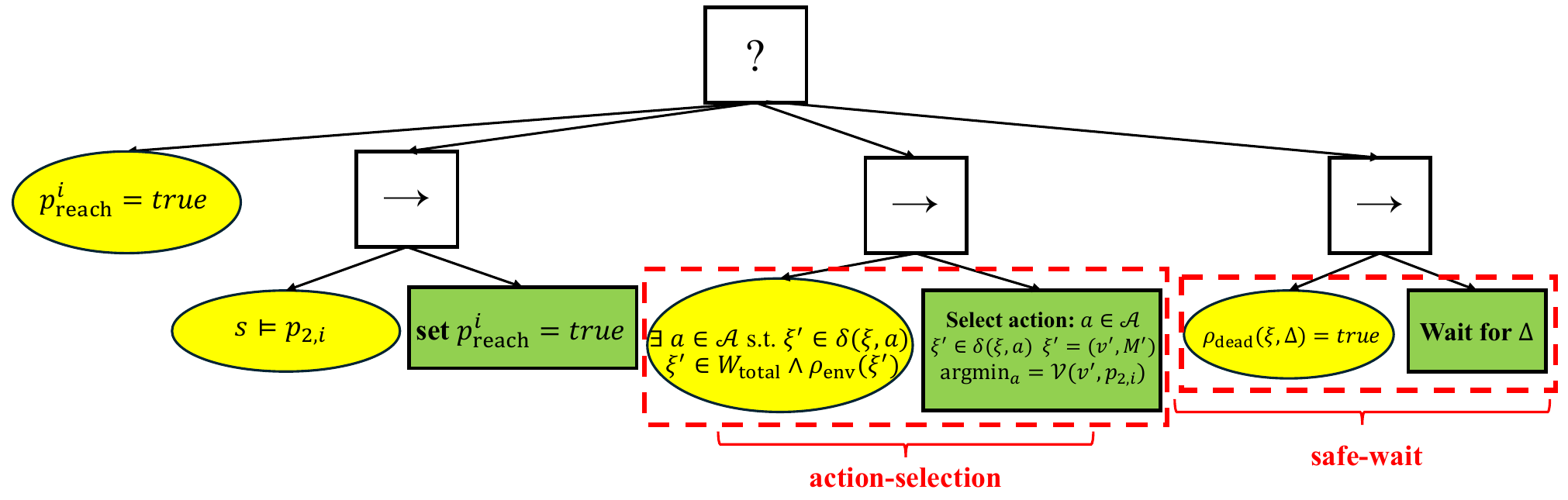}
    \caption{Structure of a Reachability subtree $\mathcal{B}_{\Diamond,i}$}
    \label{fig:bt_reach}
\end{figure}

\textbf{Response subtree} $\mathcal{B}_{\Rightarrow \Diamond,i}$: For $\varphi_{\Rightarrow\Diamond,i} = \Box(q_{3,i} \Rightarrow \Diamond_{[l_{3,i},u_{3,i}]}\, p_{3,i})$, the subtree is illustrated in Fig.~\ref{fig:bt_resp}.
Branch~1 returns \emph{success} when $q_{3,i}$ is inactive. When active, branch~2 sets $p_{\mathrm{resp}}^i$ to $\mathrm{true}$ if $s \models p_{3,i}$ or $p_{\mathrm{resp}}^i$ is already set.

\begin{figure}[h]
    \centering
    \includegraphics[width=0.9\linewidth]{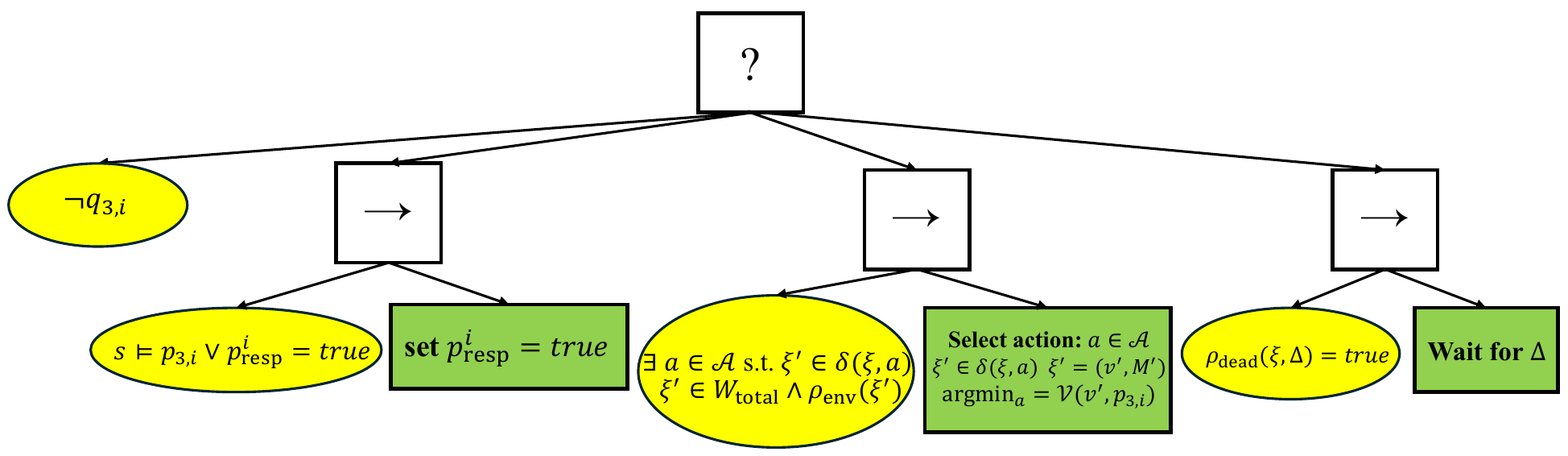}
    \caption{Structure of a Response subtree $\mathcal{B}_{\Rightarrow \Diamond,i}$}
    \label{fig:bt_resp}
\end{figure}

\textbf{Recurrence subtree} $\mathcal{B}_{\Box \Diamond,i}$: For $\varphi_{\Box\Diamond,i} = \Box(q_{4,i} \Rightarrow \Box_{[0,\bar{u}_{4,i}]}\Diamond_{[0,\hat{u}_{4,i}]}\, p_{4,i})$, the subtree coordinates IDLE ($\neg b_{\mathrm{rec}}^i$) and ACTIVE ($b_{\mathrm{rec}}^i$) modes via a seven-branch fallback (Fig.~\ref{fig:bt_rec}). Branches~1 and~3 yield via $\mathrm{S{\to}R}$ on timeout ($z_i > \bar{u}_{4,i}$). Branches~2 and~6 navigate toward~$p_{4,i}$ in IDLE and ACTIVE modes, respectively. Branch~4 returns \emph{success} upon reaching the target. Branch~5 evaluates deadline feasibility via an $\mathrm{Inv}$ decorator applied to ($\mathcal{V}(v,p_{4,i}) \le \hat{u}_{4,i} - y_i$), yielding \emph{failure} via $\mathrm{S{\to}F}$ if violated. Branch~7 provides a default wait. Mode transitions follow~\eqref{eq:rec_update}.

\begin{figure}[h]
    \centering
    \includegraphics[width=0.98\linewidth]{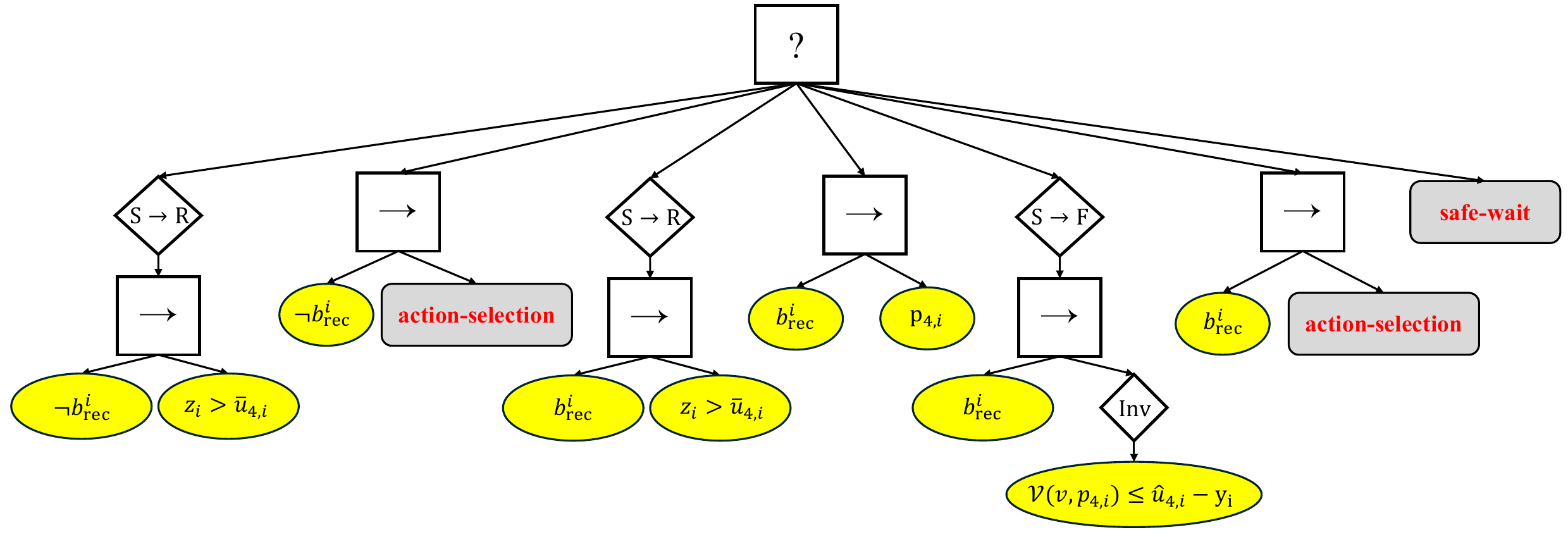}
    \caption{Structure of a Recurrence subtree $\mathcal{B}_{\Box \Diamond,i}$}
    \label{fig:bt_rec}
\end{figure}

\textbf{Persistence subtree} $\mathcal{B}_{\Diamond\Box,i}$: For $\varphi_{\Diamond\Box,i} = \Box(q_{5,i} \Rightarrow \Diamond_{[0,\bar{u}_{5,i}]} \Box_{[0,\hat{u}_{5,i}]}\, p_{5,i})$, the subtree manages three internal modes via an eight-branch fallback (Fig.~\ref{fig:bt_persist}). In IDLE mode ($\neg b_{\mathrm{act}}^i$), branch~1 yields control via an $\mathrm{S{\to}F}$ decorator when $q_{5,i}$ is inactive, while branch~2 initiates SEARCH upon activation. In HOLD mode ($b_{\mathrm{act}}^i \land b_{\mathrm{hld}}^i$), branch~3 uses $\mathrm{S{\to}F}$ to signal completion ($y_i \ge \hat{u}_{5,i}$), branch~5 re-navigates if displaced from~$p_{5,i}$, and branch~6 waits in place. In SEARCH mode ($b_{\mathrm{act}}^i \land \neg b_{\mathrm{hld}}^i$), branch~4 yields on timeout ($z_i \ge \bar{u}_{5,i}$) via $\mathrm{S{\to}F}$, branch~7 waits upon reaching $p_{5,i}$, and branch~8 navigates toward the target. All mode transitions follow~\eqref{eq:pers_update}.

\begin{figure}[h]
    \centering
    \includegraphics[width=\linewidth]{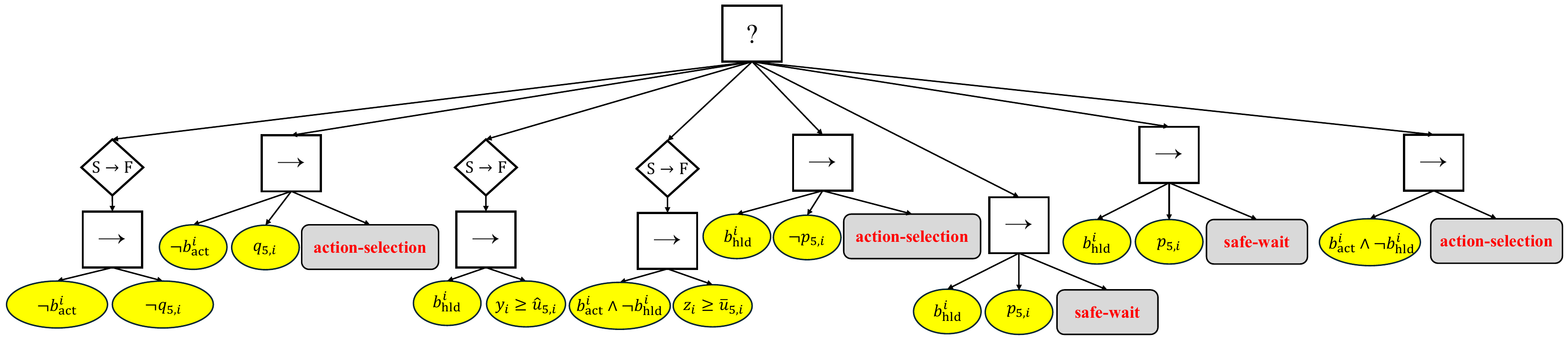}
    \caption{Structure of a Persistence subtree $\mathcal{B}_{\Diamond\Box,i}$}
    \label{fig:bt_persist}
\end{figure}

\section{THEORETICAL ANALYSIS}

This section establishes the formal guarantees and computational complexity of the proposed synthesis framework under Assumptions~\ref{assump:Non Zeno}--\ref{assump: Bounded Adversial}. Recursive feasibility is first proven through the forward invariance of the winning set, followed by the verification of monotonic progress that guarantees liveness under timing constraints.

\begin{lemma}[Forward Invariance]\label{lem:safety}
The execution of the synthesized Behavior Tree with the constraint function $\rho$ ensures that the system state $\xi(t)$ remains within the total winning set $W_{\mathrm{total}}$ for $t \geq 0$, satisfying safety specifications.
\end{lemma}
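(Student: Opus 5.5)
The argument is an induction over the execution $(\xi_0,t_0),(\xi_1,t_1),\ldots$ of Definition~\ref{def:policy_trace}, combined with a continuity step that covers the waiting intervals $[t_k,t_{k+1}]$ between discrete transitions.

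\textbf{Base case.} I take $\xi_0\in W_{\mathrm{total}}$ from Assumption~\ref{assump:Initial Consistency}. I read realizability as membership in the greatest fixed point returned by Algorithm~\ref{alg:winning}. This needs a short remark: each stage removes only states from which the requirement $R'_\kappa$ fails, so no realizable state is ever removed.

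\textbf{Inductive step.} Assume $\xi_k\in W_{\mathrm{total}}$. The key observation is that every stage of Algorithm~\ref{alg:winning} terminates at a fixed point. For the safety, reachability, recurrence and persistence stages, each surviving state satisfies its requirement function, and every branch of that requirement (except the ``already satisfied'' or ``expired'' cases) contains the conjunct $\xi\in\operatorname{CPre}(W_{\kappa,i})$. The remaining cases are either absorbing, or they are covered by the outer $\rho_{\mathrm{env}}$ check. Hence $W_{\mathrm{total}}\subseteq\operatorname{CPre}(W_{\mathrm{total}})$, which is exactly the content of $\rho_{\mathrm{env}}(\xi_k)$ being true. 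Since $\xi_k\in W_{\mathrm{total}}$, $\rho_{\mathrm{env}}(\xi_k)$ holds, so the set of admissible actions retained by the action-selection branch is nonempty. This is consistent with Assumption~\ref{assump:No Deadlock}.

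I then split on which branch of the tick produces $\pi_{\mathcal{B}}(\xi_k)$.
\begin{itemize}
\item \emph{Action-selection branch.} This branch only admits actions $a$ whose successors satisfy $\xi'\in W_{\mathrm{total}}\land\rho_{\mathrm{env}}(\xi')$, and the choice in~\eqref{eq:action_select} is restricted to these. Hence every adversarial outcome $\xi_{k+1}\in\delta(\xi_k,a^*)$ lies in $W_{\mathrm{total}}$.
\item \emph{Safe-wait branch.} This branch is guarded by $\rho_{\mathrm{dead}}(\xi_k,\Delta)$. By~\eqref{eq:rho_dead}, $\xi_k+t\in W_{\mathrm{total}}$ for all $t\in[0,\Delta]$, and the invariant $I(s)$ holds throughout, so the time transition is legal.
\item \emph{Subtrees returning failure or $\mathrm{S{\to}F}$/$\mathrm{S{\to}R}$ yields.} These return control to the root fallback without executing an unconstrained action. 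I need to argue that the root falls through to a constrained branch, so that no action outside these two guarded branches is ever executed.
\end{itemize}

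\textbf{Continuous time.} To cover every $t\ge0$ rather than only the sample instants, I handle times between discrete transitions directly. Either the system is waiting, and $\rho_{\mathrm{dead}}$ certifies the whole interval. Or it is executing a discrete edge, whose zone-graph successor already accounts for the delay $\delta_e$, and membership of the successor zone covers all valuations in it. Assumption~\ref{assump:Non Zeno} ensures that the sequence $t_k$ diverges, so the union of the intervals $[t_k,t_{k+1}]$ is $\mathbb{R}_+$.

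\textbf{Safety consequence.} $W_{\mathrm{total}}\subseteq W_{\Box,i}$ for each $i$. By~\eqref{eq:req_safe}, at every state with elapsed time $t\in[l_{1,i},u_{1,i}]$ we have $s\models p_{1,i}$, so $\hat\sigma\models\Box_{[l_{1,i},u_{1,i}]}p_{1,i}$.

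\textbf{Main obstacle.} I expect the hardest step to be the claim $W_{\mathrm{total}}\subseteq\operatorname{CPre}(W_{\mathrm{total}})$, rather than merely $W_{\kappa}\subseteq\operatorname{CPre}(W_\kappa)$ for each type separately. The response stage is a least-fixed-point style accumulation, and the CPre conditions refer to type-level sets, not to their intersection. My first attempt would be to use the hierarchical nesting of the algorithm, where each stage starts from the previous stage's set. This gives $W_{\Diamond\Box}\subseteq W_{\Box\Diamond}\subseteq W_{\Diamond}\subseteq W_{\Box}$, so $W_{\mathrm{total}}=W_{\Rightarrow\Diamond}\cap W_{\Diamond\Box}$. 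I would then argue that a single action witnessing CPre for the innermost set also witnesses it for the outer sets. Where that argument fails, I would fall back on the runtime guard $\rho_{\mathrm{env}}$ as the operative invariant.
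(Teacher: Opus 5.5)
Your two guarded cases match the paper's inductive step exactly: a wait certified on the whole interval by $\rho_{\mathrm{dead}}$, and a discrete step through the $\rho_{\mathrm{env}}$-filtered action selection. Your base case is also the paper's reading of Assumption~\ref{assump:Initial Consistency}.

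The genuine problem is the extra step you place before the case split. You claim $W_{\mathrm{total}}\subseteq\operatorname{CPre}(W_{\mathrm{total}})$, equivalently that $\rho_{\mathrm{env}}$ holds everywhere on $W_{\mathrm{total}}$, and use it to conclude that the admissible action set is nonempty.

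Your justification is circular. Sending the non-$\operatorname{CPre}$ cases to ``the outer $\rho_{\mathrm{env}}$ check'' presupposes exactly that $\rho_{\mathrm{env}}$ holds on $W_{\mathrm{total}}$. Moreover, the inclusion does not follow from Algorithm~\ref{alg:winning}, and it can fail:
\begin{itemize}
\item The branch $\neg p^i_{\mathrm{reach}}\land x^i_{\mathrm{reach}}\ge l_{2,i}$ of \eqref{eq:req_reach} imposes only $\mathcal{V}(v,p_{2,i})\le u_{2,i}-x^i_{\mathrm{reach}}$. Since $\mathcal{V}$ is a minimum over all edges, it treats the environment's branching as cooperative. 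Such a state can therefore have, for every action, an adversarial successor that violates the bound, so it is not absorbing.
\item Stage~5 is worse: it is a least-fixed-point accumulation whose seeds (states with $s\not\models q_{3,i}$ or $p^i_{\mathrm{resp}}$) are admitted with no successor condition at all. Take a seed for which every action admits a successor at a $q_{3,i}$-location with $\mathcal{V}(v',p_{3,i})>u_{3,i}$. That seed lies in $W_{\mathrm{total}}\setminus\operatorname{CPre}(W_{\mathrm{total}})$.
\end{itemize}

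Your nesting idea cannot repair this. The innermost set is $W_{\mathrm{total}}=W_{\Rightarrow\Diamond}$, and it is exactly the one with no closure property. The fallback to $\rho_{\mathrm{env}}$ ``as the operative invariant'' does not work either. $\rho_{\mathrm{env}}$ is a runtime test that may simply fail, and the lookahead $\xi'\in W_{\mathrm{total}}\land\rho_{\mathrm{env}}(\xi')$ is not inductive. It guarantees a $W_{\mathrm{total}}$-preserving action at $\xi_{k+1}$, not one whose successors again pass $\rho_{\mathrm{env}}$, and it says nothing after a safe-wait.

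The paper does not need this claim. Its proof only shows that whatever transition the BT actually executes passes through one of the two guarded branches, and it takes for granted that each tick yields one of them. Your third bullet is settled by construction, since every navigating or waiting leaf of every subtree is an instance of the shared action-selection or safe-wait branch.

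The paper also treats discrete transitions as instantaneous, so all elapsed time is routed through safe-wait and certified by $\rho_{\mathrm{dead}}$. This avoids your continuous-time claim that the successor zone covers the delay $\delta_e$, which is unsupported. $\xi'\in W_{\mathrm{total}}$ certifies post-jump valuations, not the intermediate states $\xi_k+t$, $t<\delta_e$, in the source location.

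Similarly, your base-case remark that stages only remove states does not apply to Stage~5, which adds states. Simply take $\xi_0\in W_{\mathrm{total}}$ as the content of the assumption, as the paper does.

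If you delete the nonemptiness step, your argument coincides with the paper's. If you want the genuinely stronger non-blocking statement, you need a different invariant set, for example $W^{\ast}=\nu X.\,\bigl(W_{\mathrm{total}}\cap\operatorname{CPre}(X)\bigr)$ with $\xi_0\in W^{\ast}$ assumed. Then $W^{\ast}\subseteq\operatorname{CPre}(W^{\ast})$ holds by construction, and filtering actions by $W^{\ast}$ keeps the admissible set nonempty.
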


\begin{proof}
By Assumption~\ref{assump:Initial Consistency}, $\xi_0 \in W_{\mathrm{total}}$. We proceed by induction over the BT execution ticks indexed by $k \ge 0$. Suppose at tick $k$ (time $t_k$), $\xi_k \in W_{\mathrm{total}}$. The BT produces either a \emph{time transition} (wait) or a \emph{discrete transition} (action):

1) \textit{Time Transition:} The \emph{safe-wait} branch permits waiting for $\Delta$ iff $\rho_{\mathrm{dead}}(\xi_k, \Delta)$ holds. By Eq.~\eqref{eq:rho_dead}, the continuous evolution $\xi(t_k + \tau)$ remains strictly in $W_{\mathrm{total}}$ for all $\tau \in [0, \Delta]$, ensuring the subsequent state $\xi_{k+1} \in W_{\mathrm{total}}$.

2) \textit{Discrete Transition:} The \emph{action-selection} branch~\eqref{eq:action_select} computes an action $a^*$ filtered by $\rho_{\mathrm{env}}(\xi_k)$. By Eq.~\eqref{eq:rho_env}, all possible successor states under environmental non-determinism satisfy $\xi' \in W_{\mathrm{total}}$. As discrete transitions are instantaneous, $\xi_{k+1} \in W_{\mathrm{total}}$.

Both branches prevent violating $W_{\mathrm{total}}$, so induction guarantees $\xi(t) \in W_{\mathrm{total}}$ for all time $t \ge 0$. Consequently, each safety requirement $R_{\Box,i}$ remains $\mathrm{true}$, enforcing $s(t) \models p_{1,i}$ over $[l_{1,i}, u_{1,i}]$.
\end{proof}

\begin{lemma}[Monotonic Progress and Timing Guarantees]\label{lem:progress}
Within the set $W_{\mathrm{total}}$, the induced policy $\pi_{\mathcal{B}}$ guarantees that all active liveness tasks (Reachability, Response, Recurrence, Persistence) fulfill their specifications within the timing constraints without cross-specification interference.
\end{lemma}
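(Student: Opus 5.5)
The plan is to build on Lemma~\ref{lem:safety}. Since $\xi(t)\in W_{\mathrm{total}}$ for all $t\ge 0$, every type-level requirement $R'_{\kappa}(\xi(t),p_{\kappa,i})$ stays true along the execution. The liveness argument then has two parts: a potential-decrease argument for each active task, and an interference argument for the conjunction. For each active liveness task I would fix a ranking quantity. For reachability and response this is the slack $\mathcal{V}(v,p_{\kappa,i})$ measured against the remaining time $u_{\kappa,i}-x$. For recurrence it is $\mathcal{V}(v,p_{4,i})$ against $\hat{u}_{4,i}-y_i$. For persistence in SEARCH it is $\mathcal{V}(v,p_{5,i})$ against $\bar{u}_{5,i}-z_i$, and in HOLD it is the remaining hold time $\hat{u}_{5,i}-y_i$. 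The goal is to show that each quantity reaches its terminal value before the corresponding deadline. The invariant that drives this is the timing-feasibility conjunct of each $R'_{\kappa}$: in every visited state the required inequality $\mathcal{V}\le u-x$ holds, since the state lies in $W_{\mathrm{total}}$.

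The key step is a one-tick descent claim. Suppose the action-selection branch fires and selects $a^*$ by~\eqref{eq:action_select}. By the definition of $\mathcal{V}$ via~\eqref{eq:value_func}, there is an edge with $\mathcal{V}(v,p)=c(v,v')+\mathcal{V}(v',p)$. If the $\rho_{\mathrm{env}}$ filter keeps that optimal edge, then $\mathcal{V}$ drops by exactly the elapsed cost while the timer grows by at most that amount, so the slack $u-x-\mathcal{V}$ does not decrease. If the filter removes the optimal edge because of environmental nondeterminism, the argument falls back on membership of the successor in $W_{\mathrm{total}}$: the successor again satisfies $\mathcal{V}(v',p)\le u-x'$. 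Safe-wait steps are admitted only under $\rho_{\mathrm{dead}}$, so they also preserve the inequality. Combining this with Assumption~\ref{assump:Non Zeno}, which gives finitely many discrete steps in bounded time, and Assumption~\ref{assump: Bounded Adversial}, which says the environment cannot block location-changing progress forever, a location with $s\models p$ (so $\mathcal{V}=0$) is reached while $x\le u$. At that point the flag $p_{\mathrm{reach}}^i$ or $p_{\mathrm{resp}}^i$ is set, or the recurrence timer $y_i$ is reset by~\eqref{eq:rec_update}, or the persistence mode moves to HOLD by~\eqref{eq:pers_update}. For HOLD, the conjunct $s\models p_{5,i}$ together with $\operatorname{CPre}$ keeps the agent on $p_{5,i}$ until $y_i\ge\hat{u}_{5,i}$. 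For recurrence, repeating the argument between consecutive resets gives a visit every $\hat{u}_{4,i}$ throughout the window $\bar{u}_{4,i}$. The lower bounds $l$ are handled by the $\operatorname{CPre}$ conjunct in the case $x<l$ and by the flag rules, which only register a visit when $x\in[l,u]$.

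For non-interference, I would use the hierarchical construction of Algorithm~\ref{alg:winning}. Each stage refines inside the intersection of the earlier sets. The action filter requires $\xi'\in W_{\mathrm{total}}$, so an action chosen to make progress on task $(\kappa,i)$ keeps every other $R'_{\kappa',j}$ true and therefore keeps every other deadline inequality intact. Among concurrent response tasks, the $\mathrm{Feas}(j,\xi')$ conjunct in~\eqref{eq:req_resp} shows explicitly that resolving task $i$ leaves task $j$ feasible. Hence switching the ranked task between ticks never breaks feasibility of a pending one. Each pending task's slack inequality persists, so when it is eventually served, and the inequality forces that to happen before its deadline, the descent argument applies to it.

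I expect the main obstacle to be the descent claim when several tasks compete and the argmin in~\eqref{eq:action_select} is taken for one particular $p_{\kappa,i}$. Preserving the inequalities does not by itself give strict progress; the agent could in principle keep waiting. My first approach is to note that the timers are saturated and grow with elapsed time, while the feasibility inequality $\mathcal{V}\le u-x$ must keep holding. Any infinite stall would push $x$ past $u-\mathcal{V}$, which would contradict the forward invariance of Lemma~\ref{lem:safety}. So invariance itself forces the target to be reached before the deadline.
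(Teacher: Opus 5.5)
Your proposal follows essentially the same route as the paper: forward invariance from Lemma~\ref{lem:safety} keeps the timing-feasibility inequality $\mathcal{V}(v,p)\le u-x$ true along the execution, so the timer's advance toward the deadline forces $\mathcal{V}$ to zero. As in the paper, $\operatorname{CPre}$ covers early arrival before $l$, and non-interference holds because every admitted transition stays in the intersection $W_{\mathrm{total}}$. Your per-mode ranking quantities, persistence/recurrence case analysis and use of $\mathrm{Feas}$ add detail the paper omits, but the load-bearing step, as in the paper, is the invariance-forces-the-deadline argument in your last paragraph.
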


\begin{proof}
Consider an active liveness specification $i$ bounded by $[l_i,u_i]$. By Lemma~\ref{lem:safety}, $\xi(t) \in W_{\mathrm{total}}$, continuously enforcing $0 \le \mathcal{V}(v(t), p_i) \le u_i - x_i(t)$. As $x_i(t) \to u_i$, the upper bound decreases to zero. Therefore, $\lim_{x_i(t) \to u_i} \mathcal{V} = 0$. 

Since $\mathcal{V}$ computes the minimum time-to-go on a finite graph with positive edge costs, its reduction necessitates discrete spatial transitions. Under Assumptions~\ref{assump:No Deadlock}--\ref{assump: Bounded Adversial}, the environment cannot perpetually block descent actions. Thus, the BT forces the system to reach the target at some time $t^* \leq u_i$, achieving $s(t^*) \models p_i$. If reached prematurely ($t^* < l_i$), the $\operatorname{CPre}$ operator ensures a safe wait until $l_i$.

For concurrent tasks, $W_{\mathrm{total}} = \bigcap_{\kappa,i} W_{\kappa, i}$~\eqref{eq:winning_total} prevents interference. Action selection~\eqref{eq:action_select} restricts any transition $\xi \to \xi'$ to $W_{\mathrm{total}}$. Therefore, fulfilling specification $i$ guarantees the successor $\xi'$ remains valid for all other tasks $j$, intrinsically preserving their physical feasibility ($\mathcal{V}(v', p_j) \le u_j - x_j$) and safety.
\end{proof}

\begin{theorem}[Correct-by-Construction Guarantee]\label{thm:correctness}
Given a timed transition system $\mathcal{T}$ and an STL specification $\varphi$, the synthesized Behavior Tree $\mathcal{B}$ generates an induced policy $\pi_{\mathcal{B}}$ that satisfies Problem~\ref{prob:Problem_synthesis}, i.e., $\pi_{\mathcal{B}} \models \varphi$.
\end{theorem}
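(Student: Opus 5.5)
The plan is to assemble the theorem from Lemma~\ref{lem:safety} and Lemma~\ref{lem:progress}, treating the conjunction~\eqref{eq:spec_conj} type by type. Fix an arbitrary execution $(\xi_0,t_0),(\xi_1,t_1),\ldots$ under $\pi_{\mathcal{B}}$ as in Definition~\ref{def:policy_trace}, with induced signal $\hat\sigma$. It suffices to show $\hat\sigma\models\varphi_{\kappa,i}$ for every $\kappa\in\mathcal{K}$ and every $i$, since STL conjunction is pointwise (Definition~\ref{def:stl_sem}).

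\textbf{Step 1 (well-definedness of the execution).} By Assumption~\ref{assump:Initial Consistency}, $\xi_0\in W_{\mathrm{total}}$. Lemma~\ref{lem:safety} then gives $\xi(t)\in W_{\mathrm{total}}$ for all $t\ge 0$. Because $\rho_{\mathrm{env}}$ holds on $W_{\mathrm{total}}$ and Assumption~\ref{assump:No Deadlock} holds, every tick has an admissible action or an admissible wait, so the BT never returns failure at the root. Assumption~\ref{assump:Non Zeno} ensures $t_k\to\infty$, so $\hat\sigma$ is defined on all of $\mathbb{R}_+$. This matters because STL satisfaction in Definition~\ref{def:stl_sem} is evaluated on the whole time axis.

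\textbf{Step 2 (safety and reachability).} For $\varphi_{\Box,i}$, invariance in $W_{\mathrm{total}}\subseteq W_{\Box,i}$ together with case~2 of~\eqref{eq:req_safe} gives $s(t)\models p_{1,i}$ for all $t\in[l_{1,i},u_{1,i}]$, which is the semantics of $\Box_{[l,u]}$. For $\varphi_{\Diamond,i}$, Lemma~\ref{lem:progress} gives a time $t^*\in[l_{2,i},u_{2,i}]$ with $s(t^*)\models p_{2,i}$. Concretely, $\mathcal{V}\le u_{2,i}-x^i_{\mathrm{reach}}$ is maintained, the $\operatorname{CPre}$ branch of~\eqref{eq:req_reach} handles early arrival, and the flag $p^i_{\mathrm{reach}}$ records success.

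\textbf{Step 3 (the $\Box(q\Rightarrow\cdot)$ types).} For response, recurrence and persistence, I would argue for an arbitrary trigger time $\tau$ with $q_{n,i}\in\hat\sigma(\tau)$. The memory updates reset the relevant timers at $\tau$: $x_{\mathrm{resp}}$ for response, $(z,y)$ via~\eqref{eq:rec_update} for recurrence, and $(z,y)$ via~\eqref{eq:pers_update} for persistence. Each obligation then reduces to a bounded reachability instance relative to $\tau$, and Lemma~\ref{lem:progress} applies to it.

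\begin{itemize}
\item \emph{Response:} $p_{3,i}$ is reached in $[\tau+l_{3,i},\tau+u_{3,i}]$.
\item \emph{Recurrence:} apply the reachability argument repeatedly. Each visit to $p_{4,i}$ resets $y_i$, and the $R'_{\Box\Diamond}$ invariant $\mathcal{V}\le\hat u_{4,i}-y_i$ forces the next visit within $\hat u_{4,i}$ for as long as $z_i\le\bar u_{4,i}$. This yields $\Box_{[0,\bar u]}\Diamond_{[0,\hat u]}p_{4,i}$.
\item \emph{Persistence:} SEARCH mode is a reachability task with deadline $\bar u_{5,i}$. In HOLD mode, the invariant $s\models p_{5,i}$ from the first case of~\eqref{eq:req_pers} persists until $y_i\ge\hat u_{5,i}$, giving $\Diamond_{[0,\bar u]}\Box_{[0,\hat u]}p_{5,i}$.
\end{itemize}

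Since $\tau$ is arbitrary, the outer $\Box$ holds.

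\textbf{Main obstacle.} The hard step is justifying that the discrete memory semantics faithfully encodes the continuous STL semantics. Three issues arise:

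\begin{itemize}
\item \emph{Overlapping triggers.} A second $q_{3,i}$ may occur while an earlier obligation is pending. I would argue that any $p_{3,i}$ visit meeting the later, reset deadline also lies within $[\tau+l,\tau+u]$ for the earlier trigger, or else restrict to the paper's reset-on-$q$ convention and show that it is conservative.
\item \emph{Saturation.} Timers are clipped by $\mathrm{sat}$, so I need to check that saturation at $u+1$ loses no information before the deadline.
\item \emph{Sampling between ticks.} Propositions must hold between ticks, not only at ticks. This is handled by $\rho_{\mathrm{dead}}$ quantifying over all $t\in[0,\Delta]$, together with labels being constant on a location.
\end{itemize}

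I would address these first, then combine Steps 2–3 over all $\kappa,i$ to conclude $\hat\sigma\models\varphi$ for every execution, that is, $\pi_{\mathcal{B}}\models\varphi$.
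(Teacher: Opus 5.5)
Your top-level route is the paper's. Its proof of Theorem~\ref{thm:correctness} is a short paragraph invoking Lemma~\ref{lem:safety} and Lemma~\ref{lem:progress}, and it never discusses overlapping triggers, saturation or inter-tick sampling. The last two are harmless: saturation at $u+1$ preserves every comparison against thresholds $\le u$, and sampling is covered by $\rho_{\mathrm{dead}}$ as you say. The gap is in the extra claims you add around Lemma~\ref{lem:progress}, on which you explicitly make the proof depend. Under the paper's definitions these claims fail, so the obstacles cannot be closed along the routes you sketch.

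\emph{Overlapping triggers.} For response, take triggers $\tau<\tau'$. A visit in $[\tau'+l_{3,i},\tau'+u_{3,i}]$ lying beyond $\tau+u_{3,i}$ meets the later deadline but violates the obligation from $\tau$, so your first fix is false. Resetting $x^i_{\mathrm{resp}}$ at every $q_{3,i}$, as your Step~3 assumes, is anti-conservative because it postpones a pending deadline. When $l_{3,i}>0$, no single timer can be faithful, since one visit may discharge $\tau$ yet precede $\tau'+l_{3,i}$. The same issue, unflagged, affects the other two types. By~\eqref{eq:rec_update} and~\eqref{eq:pers_update}, $(z_i,y_i)$ are reset only on the IDLE-to-active transition. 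A trigger arriving in ACTIVE, SEARCH or HOLD mode therefore resets nothing, and your premise that the memory restarts the timers at an arbitrary trigger time $\tau$ is false.

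\emph{Steps that fail even for a single trigger.} In recurrence, the ACTIVE clause of~\eqref{eq:req_rec} is released as soon as $z_i>\bar{u}_{4,i}$. Yet $\Box_{[0,\bar{u}]}\Diamond_{[0,\hat{u}]}\,p_{4,i}$ at $\tau$ still demands, for $t$ just after the last in-window visit $t_{\mathrm{last}}$, a visit in $(\tau+\bar{u}_{4,i},\,t_{\mathrm{last}}+\hat{u}_{4,i}]$. Nothing enforces this, so ``this yields'' does not follow from the invariant. In response, the first case of~\eqref{eq:req_resp} makes $R'_{\Rightarrow\Diamond}$ true whenever $s\not\models q_{3,i}$. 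Invariance in $W_{\mathrm{total}}$ therefore stops carrying $\mathcal{V}\le u_{3,i}-x^i_{\mathrm{resp}}$ once the trigger location is left, and your reduction to bounded reachability has no invariant to rest on.

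\emph{Step~1.} $\rho_{\mathrm{env}}(\xi)$ is exactly $\xi\in\operatorname{CPre}(W_{\mathrm{total}})$, and Algorithm~\ref{alg:winning} does not provide this. Stage~5 is a least fixed point that admits states of $W_{\Diamond\Box}$ whose response tasks are in the first case of~\eqref{eq:req_resp}, with no $\operatorname{CPre}$ clause. Several earlier cases drop that clause too, e.g.\ $x^i_{\mathrm{reach}}\ge l_{2,i}$ in~\eqref{eq:req_reach}, or $z_i>\bar{u}_{4,i}$. Assumption~\ref{assump:No Deadlock} gives only a non-empty successor set, not a winning one. Lemma~\ref{lem:safety} silently presupposes this property, so you may cite the lemma but should not claim to derive the property from Assumption~\ref{assump:No Deadlock}.

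Making your plan rigorous therefore requires changing the memory encoding, not just checking it. You would need per-obligation timers (or a single earliest-pending timer when $l=0$), a pending flag for response, and a rule that releases recurrence only after its last inner obligation is met. The paper's proof does not supply any of this either; it delegates everything to Lemma~\ref{lem:progress}, whose argument treats a single activation with one deadline.
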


\begin{proof}
The proof follows directly from Lemma~\ref{lem:safety} and Lemma~\ref{lem:progress}. The hierarchical BT structure strictly confines system execution within the total winning set $W_{\mathrm{total}}$, structurally preventing safety violations. Concurrently, the value-function-guided action selection enforces monotonically decreasing time-to-go for all active specification components, fulfilling the quantitative liveness bounds. Therefore, any execution trace under $\pi_{\mathcal{B}}$ inherently conforms to $\varphi$.
\end{proof}

\begin{figure*}[t]
    \centering
    \includegraphics[width=0.96\linewidth]{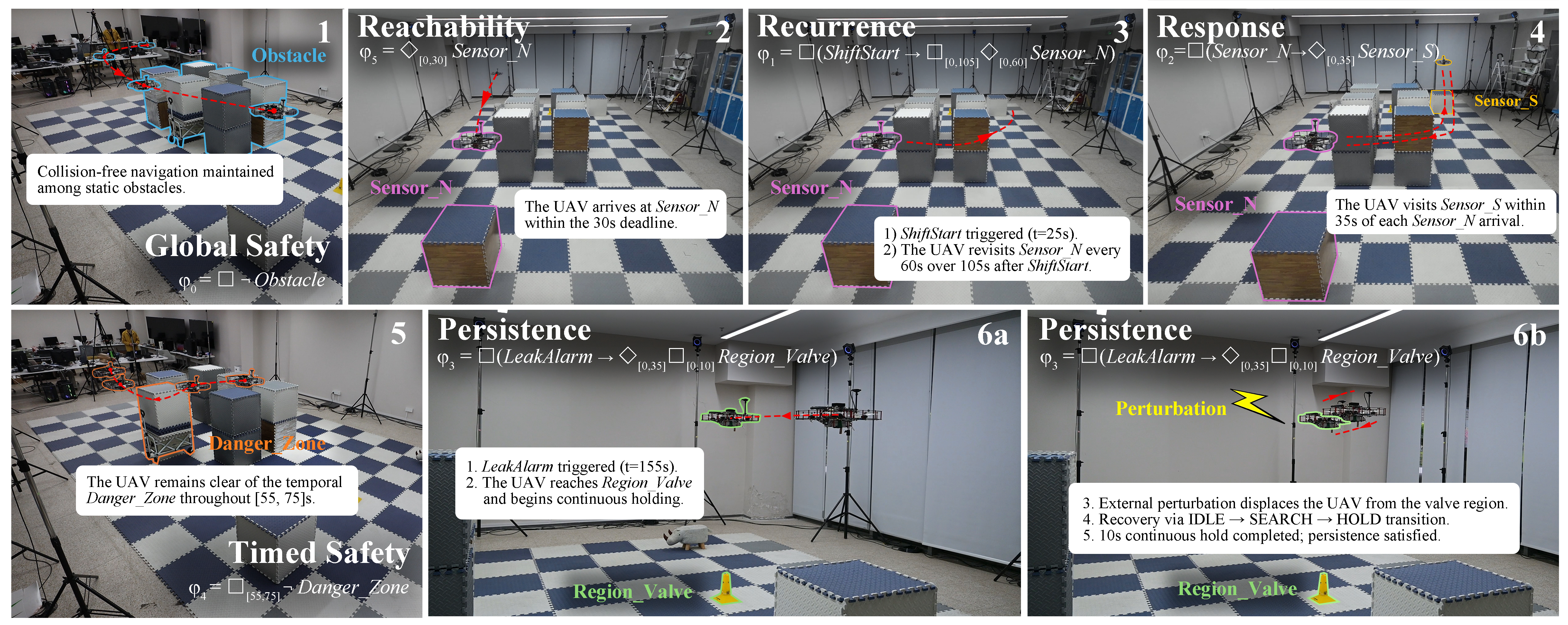}
    \caption{Key frames of the physical experiment showing how the UAV satisfies each specification.}
    \label{fig:exp_key_frame}
\end{figure*}

\begin{theorem}[Computational Complexity]\label{thm:complexity}
Let $|V|$ and $|E|$ be the number of nodes and edges of the zone graph $G_Z$, $|\mathcal{A}|$ the number of actions, $|\varphi|$ the number of specifications, and $U = \max_j u_j$ the maximum saturated deadline. The offline synthesis time complexity is
\[
O\bigl(|\varphi| \cdot |E| \cdot \log |V| + |\varphi| \cdot |\Xi|^2 \cdot |\mathcal{A}|\bigr),
\]
where $|\Xi| = O(|V| \cdot 2^{|\varphi|} \cdot U^{|\varphi|})$ bounds the augmented state space size.
\end{theorem}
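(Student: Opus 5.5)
The bound splits into the value-function precomputation and the winning-set fixed-point computation, plus a separate count of $|\Xi|$.

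\textbf{Value functions.} Algorithm~\ref{alg:winning} first invokes Algorithm~\ref{alg:value} once per target proposition $p_{\kappa,i}$, so there are at most $|\varphi|$ calls. Each call is a backward Dijkstra on $G_Z$ with a binary-heap priority queue. Every reverse edge is relaxed at most once per pop of its head, and stale heap entries are skipped by the test $d > \mathcal{V}(v',p)$. Hence each call performs $O(|E|)$ heap operations, each costing $O(\log|V|)$. The edge costs $c(v,v') = w_e + \delta_e$ can be read off the DBM of $D$ in $O(1)$ per edge once the zone graph is built, so this phase costs $O(|\varphi|\,|E|\log|V|)$.

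\textbf{Size of $\Xi$.} Each specification contributes at most a constant number of Boolean flags, from $\mathbf{p}$ and $\mathbf{b}$, and a constant number of timers. Because of saturation $\mathrm{sat}(x,u)=\min(x,u+1)$, each timer takes at most $U+2$ values once timers are tracked at integer granularity, which I take to be the implicit discretization. The memory for all specifications is then bounded by $2^{c|\varphi|}(U+2)^{c'|\varphi|}$. Absorbing the constants into the $O$-notation in the exponent, as the statement does, gives $|\Xi| = O(|V|\,2^{|\varphi|}\,U^{|\varphi|})$.

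\textbf{Fixed-point stages.} The remaining work is bounded stage by stage. Each of the five stages is a monotone iteration over a finite lattice. Stages~1--4 only remove states and Stage~5 only adds states, so each outer \texttt{while} loop runs at most $|\Xi|+1$ times. Each pass scans at most $|\Xi|$ states and evaluates $R_\kappa(\xi)$. For a single specification $i$, evaluating $R'_\kappa(\xi,p_{\kappa,i})$ is an $O(1)$ lookup of $\mathcal{V}$ plus a $\operatorname{CPre}$ test. The $\operatorname{CPre}$ test ranges over $|\mathcal{A}|$ actions, each with a bounded successor set under $\delta$. Taking the per-state successor count as $O(1)$ amortized, one pass over one specification costs $O(|\Xi|\,|\mathcal{A}|)$. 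Summing over the $|\varphi|$ specifications across all stages and multiplying by the $O(|\Xi|)$ iterations gives $O(|\varphi|\,|\Xi|^2\,|\mathcal{A}|)$.

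The response stage needs extra care, because its $\mathrm{Feas}(j,\xi')$ conjunction ranges over $j\neq i$. That would naively add another factor $|\varphi|$. I would handle it by precomputing, for each $\xi'$, the count of response tasks $j$ that are infeasible at $\xi'$. This takes $O(|\varphi|)$ per state, or $O(|\varphi|\,|\Xi|)$ in total, which is dominated by the main term. Each conjunction then reduces to an $O(1)$ check.

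\textbf{Runtime components.} The constraint function $\rho$ and the BT construction add only lower-order terms. Evaluating $\rho_{\mathrm{env}}$ costs $O(|\mathcal{A}|)$ per state. Building the subtrees is linear in $|\varphi|$. Summing all phases gives the claimed bound.

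\textbf{Main obstacle.} The hardest step is justifying the $|\Xi|^2$ factor. It rests on the $O(|\Xi|)$ bound on the number of iterations and on absorbing the successor-set size and the $\mathrm{Feas}$ cross-terms into constants. I would state these as explicit conventions, namely bounded nondeterminism per action and precomputed feasibility counters. If a worklist implementation is used instead, the bound is in fact loose, and that is fine for an upper bound.
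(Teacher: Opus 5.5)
Your proposal is correct and follows the paper's proof essentially step for step. Both use $|\varphi|$ backward-Dijkstra runs at $O(|E|\log|V|)$ each, a product count of Boolean flags and saturated timers for $|\Xi|$, and at most $|\Xi|$ monotone fixed-point iterations, each scanning $|\Xi|$ states with an $O(|\mathcal{A}|)$ $\operatorname{CPre}$ test. Your added conventions (bounded nondeterminism per action, precomputed $\mathrm{Feas}$ counters, and summing per-specification evaluations across stages to get the $|\varphi|$ factor) only make explicit what the paper leaves implicit. The one caveat is your step of absorbing constants in the exponent, which the paper also takes silently: it holds only in the weaker sense $|\Xi| = |V|\,2^{O(|\varphi|)}\,(U+2)^{O(|\varphi|)}$.
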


\begin{proof}
The value function $\mathcal{V}$ is calculated via Dijkstra for all $|\varphi|$ specifications in $O(|\varphi| \cdot |E| \cdot \log |V|)$ time. The augmented space $\Xi$ couples the zone graph $|V|$ with boolean memory ($2^{|\varphi|}$) and saturated timers ($U^{|\varphi|}$). The winning set is constructed using hierarchical fixed-point iterations. Convergence occurs in at most $|\Xi|$ iterations; each iteration evaluates $|\Xi|$ states. For each state, checking the requirement function involves computing $\operatorname{CPre}$ (Eq.~\ref{eq:cpre-}), which evaluates the validity of all successors generated by available actions, bounded by $O(|\mathcal{A}|)$. Thus, each stage takes $O(|\Xi|^2 \cdot |\mathcal{A}|)$, yielding an overall bound of $O(|\varphi| \cdot |\Xi|^2 \cdot |\mathcal{A}|)$.
\end{proof}


\section{EXPERIMENTS}

This section evaluates the proposed framework across two domains: grid-world simulations to quantify scalability and verify correctness over 30 test scenarios, and a physical quadrotor experiment to validate hardware execution.

\subsection{Experimental Setup and Metrics}

Simulations run on an Intel i9-13980HX CPU. The physical experiment uses a PX4 quadrotor with a Jetson Orin Nano running ROS~2 Jazzy (Fig.~\ref{fig:exp_setup}). Trajectories are evaluated via quantitative STL robustness $\rho$~\cite{donze2010robust}. For each atomic proposition, the signed Euclidean distance $\mu$ is computed from the robot to the proposition region boundary. The specification robustness $\rho$ is then obtained by recursively propagating $\mu$ through STL temporal operators ($\Box$, $\Diamond$, etc.), and a specification $\varphi$ is satisfied iff its initial robustness degree $\rho_0 \triangleq \rho(\sigma,\varphi,0) {\ge} 0$, where $\sigma$ is the observed trajectory signal. Four metrics are assessed: 1)~\emph{Satisfaction rate} $S$: the fraction of trials satisfying all specifications; 2)~\emph{Mean robustness} $\bar{\rho}$: the average $\rho_0$ across all specifications and trials, reflecting the execution safety margin; 3)~\emph{Synthesis time} $T_{\mathrm{off}}$: the offline time to construct the graph, value functions, and winning set; and 4)~\emph{State-space size} ($|G_Z|$, $|W_\mathrm{total}|$): the number of graph nodes and valid winning states, indicating memory footprint.


\subsection{Simulation Results}

A total of 30 scenarios are evaluated across five categories (six scenarios each): 1)~grid scale ($10{\times}10$--$35{\times}35$), 2)~specification count ($2$--$8$ specifications), 3)~spec combinations (ablating safety, reachability, response, recurrence, persistence), 4)~timing constraints ($2.0{\times}$--$1.1{\times}$ shortest-path time), and 5)~obstacle density ($0\%$--$25\%$). Table~\ref{tab:sim_results} reports category means across 10 randomized trials per scenario. It shows that the framework achieves near-perfect satisfaction ($S {\ge} 0.96$) and strictly positive robustness ($\bar{\rho} {>} 0$) across all categories. Specification count and spec-combination categories reach $S {=} 1.0$, confirming that the synthesis correctly handles all five specification types simultaneously. The only notable degradation occurs when timing constraints approach the shortest-path limit: the tightest deadline reduces per-scenario satisfaction to $S {=} 0.86$. The offline synthesis time $T_{\mathrm{off}}$ scales primarily with workspace size (from $1.78$\,s at $10{\times}10$ to $25.02$\,s at $35{\times}35$), while varying timing constraints or obstacle density incurs negligible additional cost.

\begin{table}[h]
\centering
\caption{Mean simulation results across five test categories.}
\label{tab:sim_results}
\setlength{\tabcolsep}{3.5pt}
\begin{tabular}{@{}llccccc@{}}
\toprule
\textbf{Category} & \textbf{Range} & $|G_Z|$ & $|W_{\mathrm{total}}|$ & $T_{\mathrm{off}}$(s) & $\bar{\rho}$ & $S$ \\
\midrule
Grid scale & $10{\times}10$--$35{\times}35$ & 492.0 & 979.0 & 11.70 & 0.195 & 0.990 \\
Spec count & $2$--$8$ spec & 339.5 & 621.7 &  7.93 & 0.206 & 1.000 \\
Spec combo & $2$--$5$ spec types & 339.5 & 565.2 &  3.92 & 0.236 & 1.000 \\
Deadline   & $2.0{\times}$--$1.1{\times}$ & 339.4 & 665.0 &  7.86 & 0.085 & 0.960 \\
Obstacles  & $0\%$--$25\%$ & 349.3 & 695.5 &  7.88 & 0.497 & 0.990 \\
\bottomrule
\end{tabular}
\end{table}

\subsection{Physical Experiment}

Fig.~\ref{fig:exp_setup} illustrates a $4\,\mathrm{m}{\times}8\,\mathrm{m}$ indoor UAV gas monitoring and sealing scenario, discretized into a $27{\times}14$ grid ($0.3\,\mathrm{m}$ resolution) at a $1.5\,\mathrm{m}$ altitude. The mission concurrently enforces six STL specifications: global safety $\varphi_0{=}\Box\neg\mathit{Obstacle}$, timed safety $\varphi_4{=}\Box_{[55, 75]}\neg\mathit{Danger\_Zone}$, initial reachability $\varphi_5{=}\Diamond_{[0, 30]}\mathit{Sensor\_N}$, periodic patrol $\varphi_1{=}\Box(\mathit{ShiftStart} \Rightarrow \Box_{[0, 105]}\Diamond_{[0, 60]}\mathit{Sensor\_N})$, secondary response confirmation $\varphi_2{=}\Box(\mathit{Sensor\_N} \Rightarrow \Diamond_{[0, 35]}\mathit{Sensor\_S})$, and emergency persistence sealing $\varphi_3{=}\Box(\mathit{LeakAlarm} \Rightarrow \Diamond_{[0, 35]}\Box_{[0, 10]}\mathit{Region\_Valve})$. Fig.~\ref{fig:exp_key_frame} shows key frames from the physical experiment.

\begin{figure}[h]
    \centering
    \includegraphics[width=\linewidth]{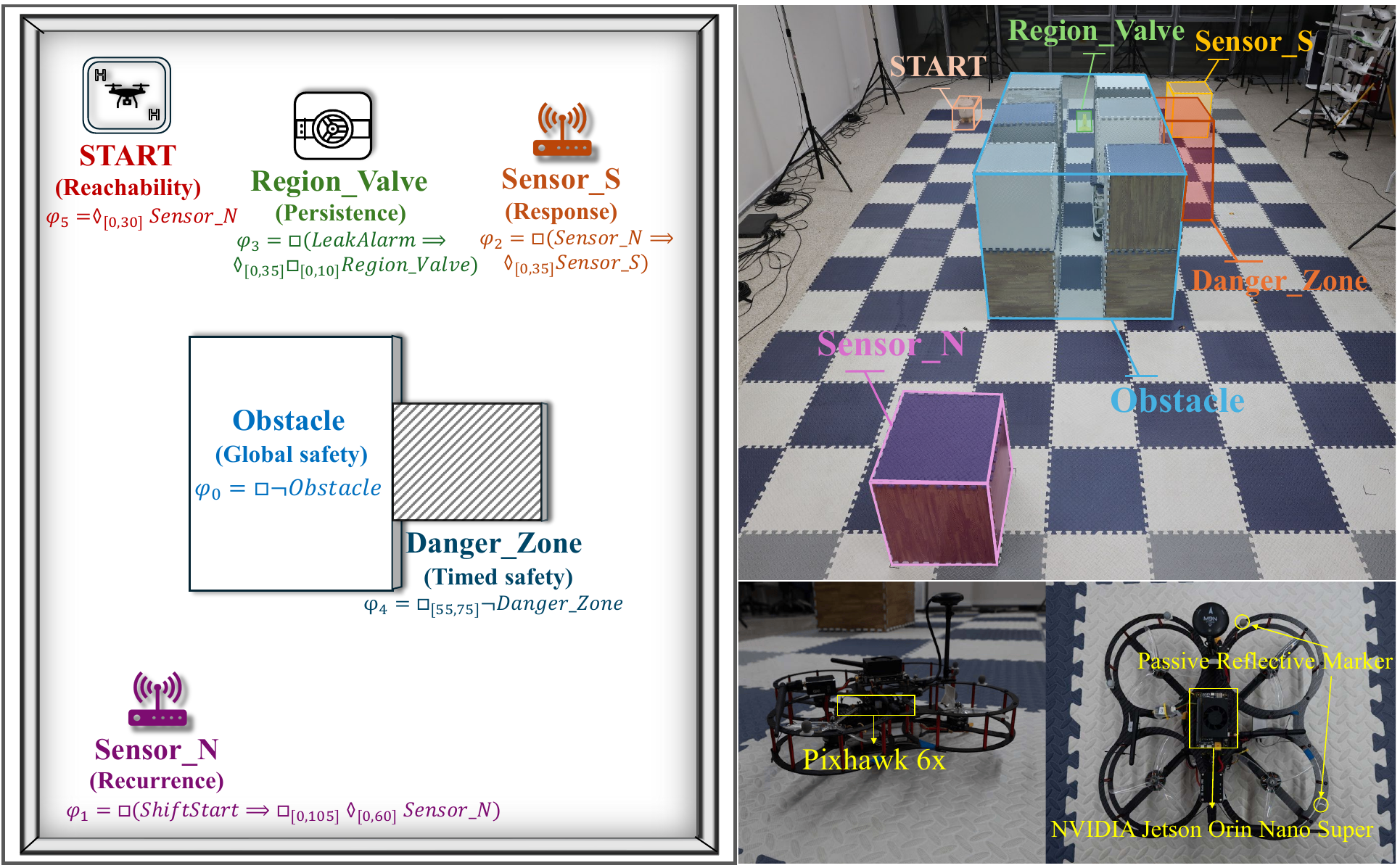}
    \caption{Gas monitoring and sealing scenario with STL specifications.}
    \label{fig:exp_setup}
\end{figure}

\begin{figure}[h]
    \centering
    \includegraphics[width=\linewidth]{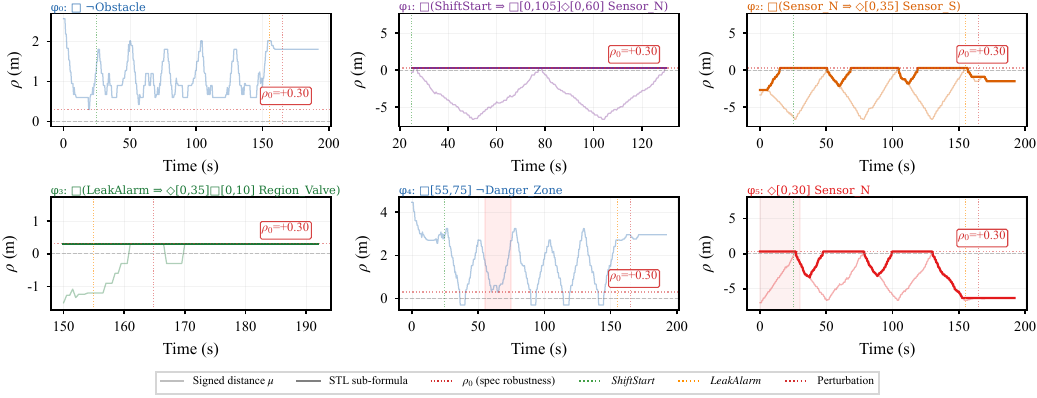}
    \caption{Per-specification STL robustness $\rho(t)$. Thin: signed distance $\mu$; bold: sub-formula robustness; dashed: $\rho_0$; dotted verticals: timed events and perturbation.}
    \label{fig:main_experiment}
\end{figure}

Offline synthesis for the six-specification scenario completes on the onboard computer prior to flight. A complete $192.0$\,s physical trial successfully satisfies all six specifications. Fig.~\ref{fig:main_experiment} illustrates the quantitative STL robustness (spatial margin in meters) evaluated over the UAV's trajectory. All specifications maintain strictly positive robustness. The timed-safety specification ($\varphi_4$) confirms that the UAV strictly avoids the temporary danger zone during $[55, 75]$\,s. Notably, the system successfully handles a physical perturbation introduced at $t {=} 164.9$\,s while executing the persistence specification $\varphi_3$. The synthesized Behavior Tree dynamically drives the UAV back to the target $Region\_Valve$, ensuring the contiguous $10$\,s holding requirement is ultimately met within the $35$\,s deadline. In conclusion, the proposed framework successfully manages multiple concurrent STL specifications and external disturbances, achieving rigorous correct-by-construction guarantees alongside physical execution.

\section{CONCLUSIONS}

This letter presented a framework that synthesized correct-by-construction Behavior Trees from Signal Temporal Logic specifications over timed transition systems. Through zone-graph abstraction, augmented states, and a value-function-guided hierarchical fixed-point algorithm, the framework provided formal correctness guarantees for STL specifications. Simulations across 30 scenarios and a physical quadrotor experiment with six concurrent specifications validated scalability, correctness, and practical deployability under disturbances. Future work targets online replanning for unknown environments and multi-robot coordination.


\bibliographystyle{IEEEtranBST/IEEEtran}

\bibliography{main}

\end{document}